\documentclass[lettersize,journal]{IEEEtran}
\usepackage{amsmath,amsfonts}
\usepackage[ruled,vlined,linesnumbered]{algorithm2e}
\usepackage[dvipsnames]{xcolor}
\usepackage{algorithmic}
\usepackage{array}
\usepackage[caption=false,font=normalsize,labelfont=sf,textfont=sf]{subfig}
\usepackage{textcomp}
\usepackage{stfloats}
\usepackage{url}
\usepackage{verbatim}
\usepackage{graphicx}
\usepackage{cite}
\usepackage{amsthm}
\usepackage{multirow}
\newtheorem{theorem}{Theorem}

\begin{document}

\title{IADD-TR: Intervention-Aware Dynamics Decoupling with Targeted Regularization for Model-Based Reinforcement Learning}

\author{Zefeng~Liang\textsuperscript{\textsection},
        Jie~Qiao\textsuperscript{\textsection},
        Ruichu~Cai,~\IEEEmembership{Senoir Member,~IEEE,}
        Weilin~Chen,
        and~Zhifeng~Hao,~\IEEEmembership{Member,~IEEE,}
\thanks{Zefeng Liang, Jie Qiao, and Weilin Chen are with  School of Computer Science, Guangdong University of Technology, Guangzhou, China. Email: lzfeng011021@gmail.com, qiaojie.chn@gmail.com, chenweilin.chn@gmail.com;}
\thanks{Ruichu Cai is with School of Computer Science, Guangdong University of Technology, Guangzhou, China. Email: cairuichu@gmail.com;}
\thanks{Zhifen Hao is with College of Science, Shantou University, Shantou, China. Email: zfhao@gdut.edu.cn}
}

\IEEEaftertitletext{%
\begin{center}
\begin{minipage}{0.92\textwidth}
\centering
\footnotesize\itshape
This work has been submitted to the IEEE for possible publication.
Copyright may be transferred without notice, after which this version
may no longer be accessible.
\end{minipage}
\end{center}
}
\markboth{Journal of \LaTeX\ Class Files,~Vol.~14, No.~8, August~2021}%
{Shell \MakeLowercase{\textit{et al.}}: A Sample Article Using IEEEtran.cls for IEEE Journals}


\maketitle

\begin{abstract}
Model-based reinforcement learning (MBRL), which learns environment dynamics to generate synthetic experience, is a promising approach to sample-efficient decision making. Numerous methods have been developed to improve dynamics prediction and policy optimization for MBRL through uncertainty estimation, model regularization, and conservative value learning. However, these methods typically treat the transition model and critic as monolithic predictors, overlooking the policy-induced data bias. Consequently, action can become entangled with environmental evolution, while uneven action coverage may distort the counterfactual value estimates used for policy improvement. To address this, we propose IADD-TR, a unified framework combining Intervention-Aware Dynamics Decoupling (IADD) and Targeted Regularization (TR).
IADD factorizes transitions into an action-intervention stage and an action-free natural evolution stage, using a zero-action anchor to resolve the non-uniqueness of this two-stage factorization for robust generalization. Its latent and state-aligned components are identifiable up to an invertible within-block transformation and pointwise, respectively.
For policy learning, we derive TR from the efficient influence function of a replay-state policy-gradient functional. TR augments the critic with an action-density-scaled residual correction and optimizes a targeted loss, yielding doubly robust policy-gradient estimation when either the critic or the replay action density is consistently specified. Extensive experiments on five MuJoCo tasks show that IADD-TR achieves competitive returns with improved sample efficiency. 
\end{abstract}

\begin{IEEEkeywords}
Model-based reinforcement learning, causal inference, confounding bias, dynamics modeling, actor-critic methods, policy optimization, sample efficiency, targeted regularization.
\end{IEEEkeywords}

\section{Introduction}
\label{sec:introduction}
The pursuit of sample-efficient and generalizable decision-making agents has long been a central challenge in reinforcement learning (RL) \cite{sutton1998introduction}. Model-based reinforcement learning (MBRL) addresses this challenge by learning environment dynamics to predict the consequences of actions and then using the learned model to generate simulated experience without repeatedly querying the original environment. This capability is particularly valuable when real-world interactions are costly or difficult to obtain, such as in high-fidelity simulators, physical robotics, and human-involved decision-making environments. These advantages have fueled substantial interest in MBRL \cite{sutton1990integrated, luo2024survey, deisenroth2011pilco, levine2013guided, Hafner2020Dream, moerland2023model, schrittwieser2020mastering} and have led to advanced methods such as Model-based Policy Optimization (MBPO) \cite{janner2019trust}, which builds on Dyna-style learning \cite{sutton1991dyna} with a learned dynamics model and a Soft Actor-Critic (SAC) agent \cite{haarnoja2018soft}.

\begin{figure}
    \centering
    \includegraphics[width=0.48\textwidth]{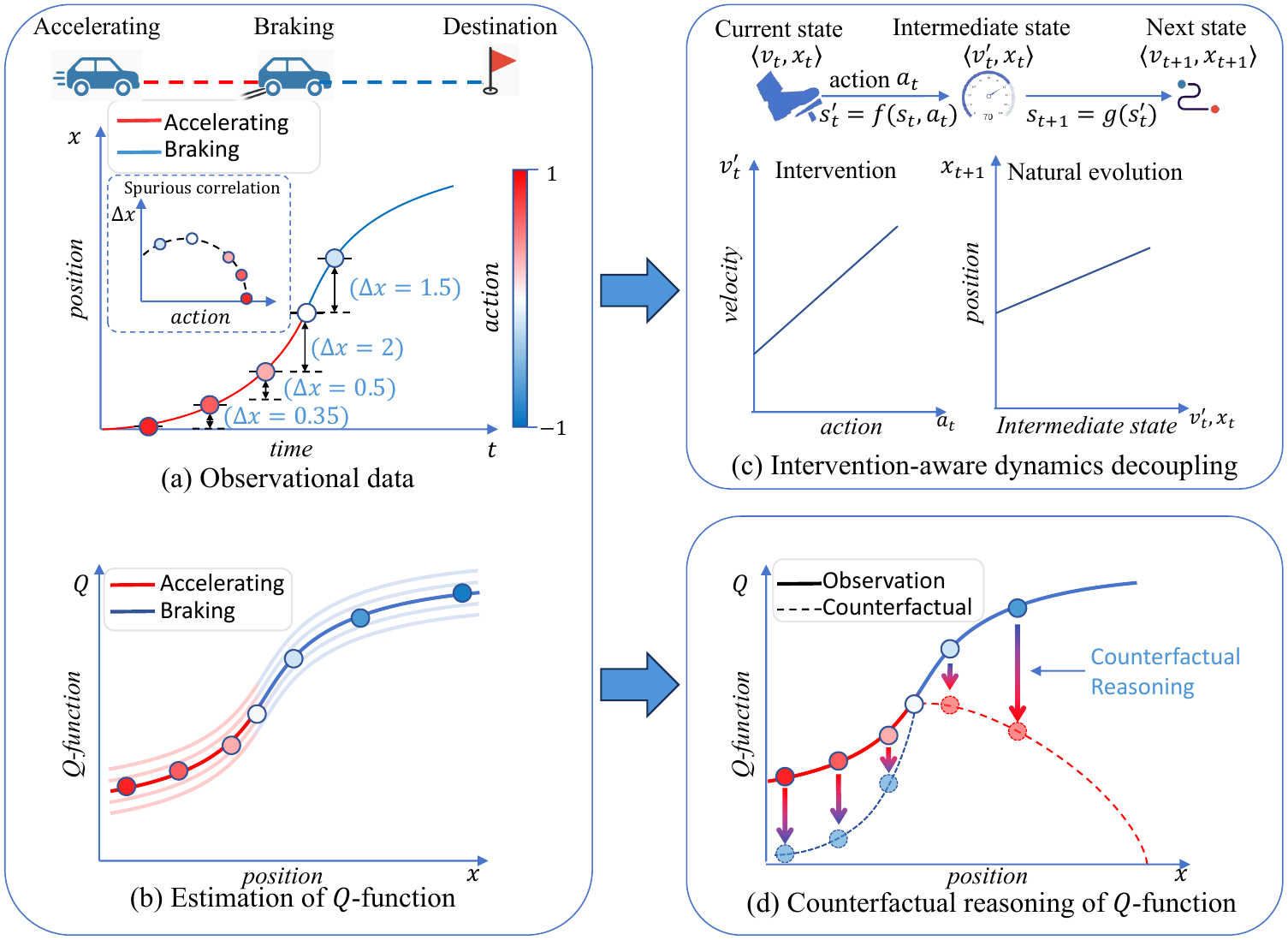}
    \caption{An example of confounding bias in a car-driving scenario, together with an illustration of intervention-aware dynamics decoupling and counterfactual reasoning for the $Q$-function.}
    \label{fig:introduction}
\end{figure}

Despite these successes, two interrelated challenges remain central to MBRL: (1) learning an accurate dynamics model from limited agent-collected data, and (2) learning an effective policy from imperfect model-generated data. At its core, both dynamics model learning and policy optimization rely on estimating complex mappings from data collected by the agent. For model learning, MBRL seeks to learn a one-step transition model $p(s_{t+1} | s_t, a_t)$; for policy learning, actor-critic methods often learn a value function such as $Q_{\omega}(s_t, a_t)$. These formulations, which turn complex dynamical and decision-making processes into direct function approximation problems, provide a convenient simplification and work well in many settings. However, they can become error-prone when interactions among states, actions, and subsequent transitions cause policy-generated data to contain \textit{confounding bias}. Consider the example in Fig.~\ref{fig:introduction}(a), in which a self-driving car is approaching a destination. Along the observed trajectories, the policy accelerates primarily near the starting region and brakes as the car approaches the destination. As a result, these policy-generated data would create spurious correlations such that a larger acceleration is correlated with smaller subsequent positions and a smaller acceleration is correlated with larger subsequent positions. While the model might perform well in this specific scenario, it would fail to generalize. This is because the position acts as a confounder, influencing both the action and the resulting state.

This confounding bias corrupts the learning process in two critical ways. First, for model learning, the model $p$ may learn spurious correlations from limited agent-collected data. For instance, it may wrongly associate deceleration with large positional gains, leading to poor generalization in new scenarios. Second, for policy learning, Q-value estimation can suffer from selection bias because the critic is trained only on the actions observed in the agent's experience. As illustrated in Fig.~\ref{fig:introduction}(b) and (d), this state-dependent action coverage makes it difficult to evaluate counterfactual actions, such as the outcome of accelerating at a larger position, thereby hindering the agent's ability to find a truly optimal policy.

Various approaches have been proposed to mitigate these challenges in model and policy learning. For model learning, techniques such as model ensembles \cite{kurutach2018modelensemble}, Lipschitz constraints \cite{venkatraman2015improving,asadi2018lipschitz}, and distribution matching methods \cite{chen2023adversarial,ho2016generative} have been used to improve the robustness of one-step transition estimation. However, these approaches still treat dynamics learning as a monolithic, black-box mapping problem. Similarly, for policy learning, methods from offline reinforcement learning address selection bias by either constraining the policy to the data distribution \cite{fujimoto2019off} or learning conservative Q-functions \cite{kumar2020conservative}. While effective at preventing catastrophic failures, these approaches do not directly address the confounding mechanism that induces biased dynamics and value estimates.

In this work, we focus on confounding bias arising from limited interactive data within the actor-critic framework for model-based reinforcement learning \cite{frauenknecht2024trust,janner2019trust}. The preceding analysis suggests that confounding bias is not merely a defect of the collected data but is also amplified by algorithmic choices inherited from monolithic black-box modeling. Such choices can lead to model misspecification and biased estimates from biased data \cite{Hernan2024-HERCIW}. Instead of treating the dynamics model and the critic as generic functions to be approximated, we deconstruct both learning targets to explicitly account for and mitigate confounding bias.

First, for model learning, we begin with the insight from causal inference that an agent's action is fundamentally a causal intervention in many environments. For instance, as shown in Fig. \ref{fig:introduction}(c), the action of acceleration is an intervention on the velocity over a short time interval. This perspective motivates our Intervention-Aware Dynamics Decoupling method, which reconstructs the transition process by aligning it with the underlying causal process present in many environments: (1) the action's immediate intervention, and (2) the environment's subsequent natural evolution. To model this, we use an intermediate state to capture the direct consequences of the intervention. The environment then transitions naturally from this intermediate state. Crucially, this two-stage process can be anchored in many environments by a reference zero action, namely an action that applies no direct intervention and leaves the intermediate observable state unchanged. By leveraging this zero-action condition, we can learn the natural evolution component and then separate the effect of nonzero interventions from the subsequent environment dynamics. This causal decomposition helps prevent the dynamics model from explaining action-induced changes as natural evolution, thereby reducing confounding bias and improving sample efficiency.

Second, for policy learning in a general actor-critic framework for MBRL, the Q-function is commonly treated as a black-box approximation learned via Bellman error minimization. Learning this approximation is difficult from confounded data, and more importantly, it is not the ultimate target of policy optimization. In its role as a critic, the Q-function should provide the information needed for an accurate and less biased estimate of the policy gradient. The standard Bellman-error objective can therefore be mismatched with the final goal of improving the actor through its gradient. Motivated by targeted learning in causal inference \cite{van2011targeted}, we introduce Targeted Regularization (TR) to align critic learning more directly with the policy-gradient target. Rather than only minimizing the prediction error of the Q-function, our approach regularizes the critic so that it produces a debiased and efficient estimate of the target parameter, guiding the policy update in a more reliable direction.

This work makes four key contributions toward addressing confounding bias in MBRL:
\begin{enumerate}
    \item We propose an intervention-aware dynamics decoupling model that separates action-induced intervention effects from the natural evolution of the environment.
    \item We develop a zero-action-based identification strategy for the two-stage decoupling, which anchors the natural-evolution component and enables the effect of nonzero interventions to be separated from subsequent environment dynamics.
    \item We introduce targeted regularization for Q-function learning to reduce the impact of confounding bias on policy-gradient estimation.
    \item We provide theoretical and empirical validation demonstrating the effectiveness of the proposed method.
\end{enumerate}



\section{Related Work}\label{sec:related_work}
Our work is related to several established fields, including model-based reinforcement learning (MBRL), offline reinforcement learning, policy evaluation, and causal inference.
The core challenge in MBRL is learning an accurate dynamics model and an effective policy from limited, agent-collected data \cite{moerland2023model}. Early work focused on local linear models \cite{levine2013guided,gu2016continuous} or Gaussian processes \cite{deisenroth2011pilco}, but in more complex environments, higher capacity models are required, such as neural networks \cite{nagabandi2018neural}. To further combat model error from limited data, a primary strategy is to use probabilistic models or ensemble methods to control uncertainty \cite{gal2016improving,buckman2018sample,chua2018deep, kurutach2018modelensemble}. This uncertainty is then used to regularize policy updates, preventing the agent from exploiting areas where the model is inaccurate, as seen in Dyna-style algorithms like MBPO \cite{janner2019trust}. However, these approaches still treat the dynamics as a monolithic, black-box mapping, leaving them vulnerable to learning spurious correlations when the collected data contains confounding bias.


The confounding bias issue is also closely related to the challenge of out-of-distribution (OOD) actions in offline reinforcement learning (offline RL) \cite{levine2020offline}. In offline RL, agents learn from a fixed dataset, facing the OOD problem when evaluating OOD actions of the Q-function \cite{fujimoto2019off}. To mitigate this, common methods either constrain the learned policy to remain close to the data-generating behavior policy \cite{fujimoto2019off} or learn a conservative Q-function that explicitly penalizes OOD actions \cite{kumar2020conservative,an2021uncertainty,cheng2022adversarially}. Model-based offline methods, such as MOPO \cite{yu2020mopo} and MOREL \cite{kidambi2020morel}, also adopt a conservative stance, using model uncertainty to construct a pessimistic MDP that penalizes policy learning in risky regions. While effective at preventing catastrophic failures, these approaches are inherently conservative. Our approach, in contrast, aims to debias the learning process itself.

For policy learning, our work addresses the confounding bias that corrupts the critic's estimates. This concern with the reliability of value estimates from biased data is also related to off-policy evaluation (OPE), which aims to accurately estimate a policy's value under distributional shift \cite{uehara2025review}. The OPE field has long bridged RL and causal inference by treating the evaluation of a policy as a counterfactual quantity estimation problem \cite{murphy2001, Dudik2011}. Typical methods include Direct Methods (DM) (analogous to G-computation \cite{robins1999estimation}), which have low variance but suffer from high bias if the model is misspecified, and Importance Sampling (IS) \cite{Precup2000} (analogous to inverse propensity scoring \cite{rosenbaum1983central}), which is unbiased but suffers from high variance \cite{farajtabar2018more}. This trade-off led to Doubly Robust (DR) estimators \cite{robins1994estimation, jiang2016doubly, thomas2016data} and the targeted learning estimator \cite{bibaut2019more}, combining DM and IS to reduce both bias and variance.

While sharing this causal perspective, we re-evaluate the goal of the critic. Unlike OPE, which focuses on estimating policy value optimally, we argue that in the actor-critic framework, the critic should guide the actor by estimating a value function (such as the Q-function) precisely for estimating an unbiased and efficient policy gradient, which is the primary optimization target. Inspired by Targeted Learning and Targeted Maximum Likelihood Estimation (TMLE) \cite{van2006targeted, van2011targeted}, we introduce targeted regularization. This approach optimizes the critic to be an efficient estimator for the policy gradient via the efficient influence function (EIF) of the target parameter, thereby correcting the bias induced by confounding.

\section{Background}

In this section, we introduce the basic notation for model-based reinforcement learning, off-policy replay data, and the policy-gradient target used throughout the paper.

We consider a Markov decision process (MDP) represented by the tuple $(\mathcal{S}, \mathcal{A}, p, r, \gamma, \rho_0)$. Here, $\mathcal{S}$ denotes the state space, and $\mathcal{A}$ denotes the action space; for vector-valued states, we write $s_t\in\mathbb{R}^{d_s}$, where $d_s$ denotes the state dimension. The transition kernel $p(s'\mid s,a)$ describes the environment dynamics, defining the transition probability to state $s'$ given state $s$ and action $a$; $r(s,a)$ is the reward function; $\gamma \in (0,1)$ is the discount factor; and $\rho_0$ is the initial state distribution. The goal of reinforcement learning is to learn a policy $\pi(a\mid s)$ that maximizes the expected cumulative discounted reward:
\begin{equation}
	\eta[\pi] = \mathbb{E}_{\tau \sim \pi}\left[\sum_{t=0}^\infty \gamma^t r(s_t, a_t)\right],
\end{equation}
where the trajectory $\tau = (s_0, a_0, s_1, \dots)$ is generated by rolling out the policy in the environment, with $s_0 \sim \rho_0$ and $a_t \sim \pi(\cdot\mid s_t)$. The Q-function represents $\eta[\pi]$ conditioned on a specific state-action pair and is given by
\begin{equation}
	Q^{\pi_\theta}(s_t, a_t) = \mathbb{E}_{\tau\sim\pi_\theta}\left[\sum_{k=t}^{\infty}\gamma^{k-t}r(s_k, a_k)\middle|s_t, a_t\right].
\end{equation}

In off-policy actor-critic learning, the replay buffer contains transitions collected by historical behavior policies. We denote the replay distribution by $\mathcal{D}$ and its state marginal by $\mathcal{D}_S$. For a replay action $a$ generated by its associated behavior policy $\pi_\beta(\cdot\mid s)$, we log the density
\begin{equation}
\label{eq:propensity_score}
e_{\pi_\beta}=\pi_\beta(a\mid s).
\end{equation}
Accordingly, a replay transition is written as $(s,a,e_{\pi_\beta},r,s')\sim\mathcal{D}$. When computing its Bellman target, we sample $a'\sim\pi_\theta(\cdot\mid s')$ from the current policy and evaluate the density of that sampled action as
\begin{equation}
e_{\pi_\theta}=\pi_\theta(a'\mid s').
\end{equation}
Thus, $e_{\pi_\beta}$ is the logged density of the replay action under its behavior policy, whereas $e_{\pi_\theta}$ is the density of the action under the current policy.

To optimize the policy using the learned model, in this work, we focus on the Dyna-style MBRL \cite{sutton1991dyna} with a soft actor-critic agent \cite{haarnoja2018soft}, which is widely adopted in state-of-the-art methods such as MBPO \cite{janner2019trust}. One of our central objects of interest is the policy-gradient target, which serves as the foundation for policy optimization via the policy gradient theorem \cite{sutton1999policy}. We denote this target by $\psi_{\mathrm{pg}}$, which is expressed as
\begin{equation}
\label{eq:policy_gradient}
	\psi_{\mathrm{pg}}
	= \nabla_\theta \eta[\pi_\theta]
	= \mathbb{E}_{\substack{s \sim d^{\pi_\theta} \\ a \sim \pi_\theta(\cdot\mid s)}} \left[ \nabla_\theta \log \pi_\theta(a\mid s) \cdot Q^{\pi_\theta}(s,a) \right],
\end{equation}
where $d^{\pi_\theta}(s)$ is the discounted state visitation distribution under $\pi_\theta$, and $Q^{\pi_\theta}(s,a)$ is the action-value function. In this paradigm, the actor and critic are typically implemented as neural networks for estimating the policy $\pi_\theta(a\mid s)$ with parameter $\theta$ and the value function $Q_\omega(s,a)$ with parameter $\omega$, respectively.

\section{Methodology}
\begin{figure*}
\centering
\includegraphics[width=1.0\textwidth]{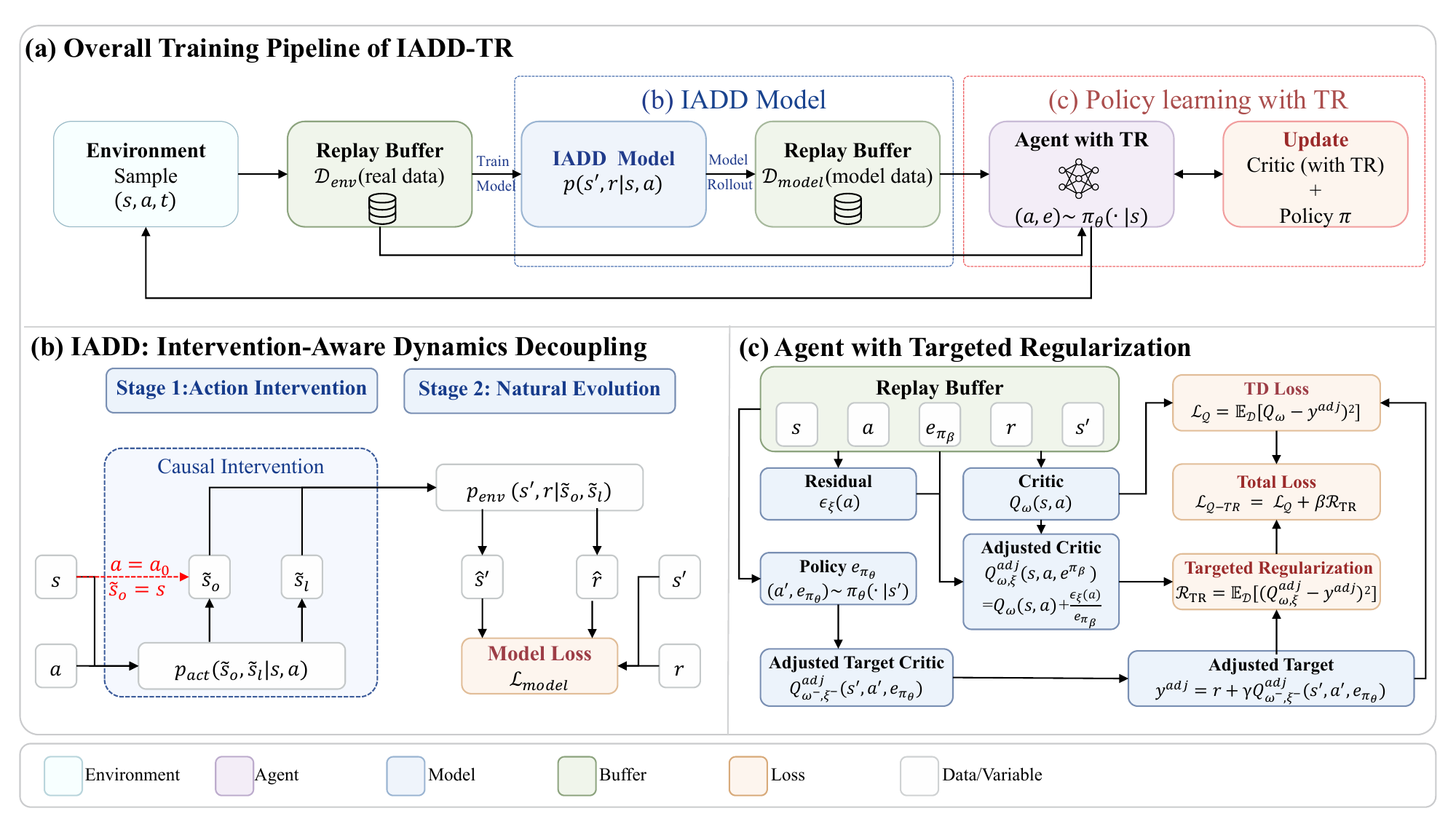}
\caption{Illustration of the proposed IADD-TR method.
The framework consists of two key components: IADD and targeted regularization (TR).
(a) In the overall training pipeline, the agent interacts with the environment to collect real transitions, which are used to train the dynamics model and to update the policy.
The learned model further generates synthetic transitions for model-based policy optimization.
(b) IADD adopts a two-stage dynamics model, which decomposes the transition prediction process into an intermediate-stage prediction and a final reward/next-state prediction.
(c) TR modifies the critic learning objective by introducing an adjusted critic and a targeted regularization term, improving the robustness of value estimation under imperfect model rollouts.}
\label{fig:framework}
\end{figure*}
In this section, we describe the Intervention-Aware Dynamics Decoupling with Targeted Regularization (IADD-TR) framework in the context of model-based reinforcement learning.
As illustrated in Fig.~\ref{fig:framework}(a), IADD-TR follows an iterative model-based training pipeline.
The agent first interacts with the environment to collect real transitions, which are stored in the environment replay buffer and used for both dynamics model learning and policy optimization.
The learned IADD dynamics model then generates synthetic transitions, which are stored in the model replay buffer and further used to augment policy learning.
During policy learning, targeted regularization is incorporated into the critic update to mitigate selection bias in value estimation.
Overall, the proposed framework is designed to address two sources of error that arise during training: model learning bias and policy learning bias.

\subsection{Addressing Model Learning Bias}

A key challenge in dynamics model learning is that action-induced changes and environment-induced changes are often entangled in the collected transition data, which can introduce confounding bias into transition prediction. To address this issue, we do not model the transition as a monolithic mapping from \((s_t,a_t)\) to \(s_{t+1}\). Instead, as illustrated in Fig.~\ref{fig:framework}(b), we explicitly decompose each one-step transition into an action-intervention stage and a natural natural evolution stage. This intervention-aware decomposition makes the action effect explicit, thereby reducing its confounding with natural evolution during dynamics learning.

Formally, we introduce an intermediate representation $\tilde{s}_t=(\tilde{s}_{t,o},\tilde{s}_{t,l})$ to characterize the post-intervention outcome induced by the action. The variable $\tilde{s}_{t,o}\in\mathbb{R}^{d_s}$ denotes the observable post-intervention state. It is designed to have the same dimensionality and semantic structure as the original state $s_t$, i.e., $d_o=d_s$, and is designed to capture the direct causal effect of the action. Moreover, to retain additional action-dependent information needed by the subsequent evolution model, we introduce an auxiliary latent component $\tilde{s}_{t,l}\in\mathbb{R}^{d_l}$. Together, $\tilde{s}_{t,o}$ and $\tilde{s}_{t,l}$ form the post-intervention representation used for predicting the next state. We write the action-intervention stage as the deterministic structural mechanism
\begin{equation}
\label{eq:deterministic_action_intervention}
\tilde{s}_{t,o}
=
p_{\mathrm{act},o}(s_t,a_t),
\qquad
\tilde{s}_{t,l}
=
p_{\mathrm{act},l}(s_t,a_t,\varepsilon_t),
\end{equation}
where $p_{\mathrm{act},o}$ and $p_{\mathrm{act},l}$ are the observable and latent components of $p_{\mathrm{act}}$, respectively, and $\varepsilon_t$ denotes exogenous noise. Thus, the mechanism is deterministic given $(s_t,a_t,\varepsilon_t)$, while the randomness in $\varepsilon_t$ induces a conditional density for $\tilde{s}_{t,l}$. We assume that this latent conditional density factorizes as
\begin{equation}
\label{eq:act_factorization}
p_l(\tilde{s}_{t,l}\mid s_t,a_t)
=
\prod_{i=1}^{d_l} p_{l,i}(\tilde{s}_{t,l,i}\mid s_t,a_t),
\end{equation}
where \(\tilde{s}_{t,l}=(\tilde{s}_{t,l,1},\dots,\tilde{s}_{t,l,d_l})\). The natural evolution stage is represented by the deterministic mechanism $p_{\mathrm{env}}$, and the resulting one-step transition model is the composition
\begin{equation}
\label{eq:induced_transition_model}
s_{t+1}
=
p_{\mathrm{env}}\!\left(
p_{\mathrm{act},o}(s_t,a_t),
p_{\mathrm{act},l}(s_t,a_t,\varepsilon_t)
\right).
\end{equation}

Note that the implementation also includes the reward-prediction head shown in Fig.~\ref{fig:framework}(b).

While this two-stage formulation separates the transition, the intermediate representation $(\tilde{s}_{t,o},\tilde{s}_{t,l})$ is strictly latent and not directly observed in the collected data. As a result, there is no guarantee that the learned intermediate representation corresponds to the intended causal post-intervention state, which can in turn compromise generalization performance.

To anchor the intermediate representation in physical reality and study its identifiability, we introduce a reference zero action $a_0$. Intuitively, a zero action represents the absence of any direct physical intervention. Under $a_0$, the observable state should remain unchanged immediately after the action-intervention stage. Formally, we define the zero-action-anchored IADD model class by the condition $p_{\mathrm{act},o}(s_t,a_0)=s_t$. This zero-action condition aligns the observable post-intervention coordinates with the original state space on the zero-action support. Where zero-action data cover the relevant reachable support, the same alignment extends to intermediate representations reached under nonzero actions.

This equality is enforced by construction rather than through an auxiliary loss. Whenever the input action equals $a_0$, the action-intervention stage directly copies $s_t$ into the observable coordinates of its intermediate output, so that $\tilde{s}_{t,o}=s_t$ exactly. The two stages are otherwise trained jointly using the standard one-step prediction objective
\begin{equation}
\label{eq:model_learning_objective}
\mathcal{L}_{\mathrm{model}}
=
\mathbb{E}_{(s_t,a_t,s_{t+1})\sim\mathcal{D}}
\left[
\left\|\hat{s}_{t+1}-s_{t+1}\right\|_2^2
\right].
\end{equation}
Thus, the exact zero-action equality is part of the IADD model-class definition and introduces neither an additional identifiability assumption nor a weighting hyperparameter in the learning objective.

With this anchor, we investigate the identifiability of the post-intervention representations by examining the uniqueness of the learned variables. To formalize this, let $(p_{\mathrm{act}}, p_{\mathrm{env}}, \tilde{s}_{t,o}, \tilde{s}_{t,l})$ denote the true parameterization of the mechanisms and intermediate variables. We then consider an observationally equivalent alternative, denoted with bars $(\bar{p}_{\mathrm{act}}, \bar{p}_{\mathrm{env}}, \bar{s}_{t,o}, \bar{s}_{t,l})$. Identifiability asks how strictly the equality of the observed transition distributions constrains the relationship between these two intermediate representations under the same marginal distribution. Since the observable component is grounded by the state of zero-action, we expect an exact pointwise recovery, such that $\bar{s}_{t,o} = \tilde{s}_{t,o}$. Conversely, for the unobserved latent component, we seek identifiability up to a permutation and component-wise invertible transformations. 

The following theorem formalizes this pointwise recovery guarantee in the population limit.
\begin{theorem}
\label{thm:point_ident}
\textbf{(Pointwise Identifiability of $\tilde{s}_{t,o}$.)}
Consider two parameterizations in the zero-action-anchored IADD model class that induce the same conditional one-step transition distribution in Eq.~\eqref{eq:induced_transition_model} for every reachable state-action context. Suppose the following condition holds:
\begin{itemize}
    \item A1 (\underline{Injectivity and support inverse}): The deterministic natural evolution mechanisms $p_{\mathrm{env}}$ and $\bar p_{\mathrm{env}}$ are injective on their reachable intermediate supports, admit continuous inverses on their common reachable next-state support, and are twice continuously differentiable on these supports.
\end{itemize}
Then the observable post-intervention state is pointwise identifiable for every reachable $(s_t,a_t)$ whose conditional next-state support is covered by zero-action contexts:
\[
\bar{s}_{t,o}(s_t,a_t)
=
\tilde{s}_{t,o}(s_t,a_t)
.\]
\end{theorem}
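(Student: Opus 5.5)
The proof has two steps. First, the zero action pins the natural-evolution mechanism to the true one on the relevant support. Second, injectivity of that mechanism transports equality of next-state distributions back to the intermediate observable state.

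\emph{Step 1 (zero-action anchoring).} For any reachable $s$, the anchor gives $\tilde{s}_{t,o}=\bar{s}_{t,o}=s$ under $a_0$. Equality of the induced transitions then yields
\[
p_{\mathrm{env}}\bigl(s,p_{\mathrm{act},l}(s,a_0,\varepsilon)\bigr)\overset{d}{=}\bar p_{\mathrm{env}}\bigl(s,\bar p_{\mathrm{act},l}(s,a_0,\varepsilon)\bigr).
\]
By A1 both maps are injective with continuous inverses on the common next-state support. So for every $s'$ in the zero-action next-state support we can define $p_{\mathrm{env}}^{-1}(s')=(u_o,u_l)$ and $\bar p_{\mathrm{env}}^{-1}(s')=(\bar u_o,\bar u_l)$.

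To compare $u_o$ with $\bar u_o$, consider the composite
\[
h=\bar p_{\mathrm{env}}^{-1}\circ p_{\mathrm{env}},
\]
a homeomorphism between the reachable intermediate supports. It is $C^2$ wherever the inverse is differentiable. The zero-action data say that $h$ maps the fibre $\{s\}\times\mathrm{supp}\,\tilde s_{l}(s,a_0)$ into $\{s\}\times\mathrm{supp}\,\bar s_{l}(s,a_0)$. Hence the observable block of $h$ satisfies $h_o(s,u_l)=s$ on the zero-action intermediate support. Equivalently, the observable coordinate of the inverse image of any next state $s'$ reachable under a zero action agrees across the two parameterizations:
\[
[p_{\mathrm{env}}^{-1}(s')]_o=[\bar p_{\mathrm{env}}^{-1}(s')]_o.
\]

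\emph{Step 2 (transport to nonzero actions).} Fix a reachable $(s_t,a_t)$ whose conditional next-state support $S'$ is covered by zero-action contexts, meaning every $s'\in S'$ is the image of some zero-action intermediate point. Because $\tilde{s}_{t,o}(s_t,a_t)$ is deterministic, injectivity gives $[p_{\mathrm{env}}^{-1}(s')]_o=\tilde{s}_{t,o}(s_t,a_t)$ for all $s'\in S'$, and likewise $[\bar p_{\mathrm{env}}^{-1}(s')]_o=\bar{s}_{t,o}(s_t,a_t)$. The two parameterizations induce the same conditional law, so they share the support $S'$. Picking any $s'\in S'$ and applying the identity from Step 1 yields $\bar s_{t,o}=\tilde s_{t,o}$.

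\emph{Main obstacle.} The delicate point is the coverage step. A single next state $s'$ has a unique preimage under $p_{\mathrm{env}}$, and that preimage must simultaneously be the zero-action intermediate point and the point reached under $a_t$. This is precisely where injectivity of $p_{\mathrm{env}}$ on the \emph{joint} reachable support is used, together with the continuity of the inverses so that support statements made up to null sets pass to pointwise ones. I would handle this by working with closed supports and invoking continuity of $p_{\mathrm{env}}^{-1}$ to upgrade almost-everywhere equality to everywhere equality. The $C^2$ assumption is not needed beyond this and is reserved for the latent-block result.
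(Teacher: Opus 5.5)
Your proposal is correct and follows essentially the paper's argument. The paper likewise forms $H=\bar p_{\mathrm{env}}^{-1}\circ p_{\mathrm{env}}$, uses the two zero-action anchors to get $\pi_o(H(\mathbf z))=\pi_o(\mathbf z)$ on the zero-action-covered region, and transports this to nonzero actions through $H_{\#}\nu_{\mathbf u}=\bar\nu_{\mathbf u}$ and the deterministic observable coordinate. Your preimage-by-preimage version just unpacks that pushforward step, and you are right that the $C^2$ part of A1 is not used here.
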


The theorem shows that the hard zero-action anchor makes the observable post-intervention state pointwise identifiable wherever zero-action data cover the relevant reachable support. In population terms, this requires sufficiently broad zero-action support. With finite data, it motivates collecting a sufficiently large and diverse set of zero-action transitions to approximate that coverage. Intuitively, zero-action transitions fix the observable coordinates, and the invertibility in A1 extends this calibration to nonzero-action states within the covered support.

We next consider the latent component. Once Theorem~\ref{thm:point_ident} has fixed the observable component pointwise, the remaining task is to identify the latent coordinates. The following theorem uses the factorized action-intervention distribution and its variation across state-action contexts to establish component-wise identifiability.

\begin{theorem}
\label{thm:latent_component_ident}
\textbf{(Component-wise Identifiability of $\tilde{s}_{t,l}$.)}
On the zero-action-covered reachable support where Theorem~\ref{thm:point_ident} establishes pointwise identifiability, suppose the factorized action-intervention model in Eq.~\eqref{eq:act_factorization} holds. For every fixed reachable value $s'_o$ of the observable post-intervention state, suppose further that:
\begin{itemize}
    \item B1 (\underline{Regularity}): The conditional latent component densities are positive and twice continuously differentiable on connected supports that locally contain product neighborhoods.
    \item B2 (\underline{Sufficient variability}): State-action contexts producing that observable state induce sufficiently diverse variations in the conditional latent distributions.
\end{itemize}
The precise full-rank variability condition is given in Appendix~\ref{app:proof_latent}. Then there exist a permutation $\sigma_{l,s'_o}$ and one-dimensional diffeomorphisms $h_{1,s'_o},\dots,h_{d_l,s'_o}$ such that
\begin{equation}
\label{eq:conditional_latent_component_ident}
\bar{s}_{t,l,j}
=
h_{j,s'_o}\!\left(
\tilde{s}_{t,l,\sigma_{l,s'_o}(j)}
\right),
\qquad j=1,\dots,d_l.
\end{equation}
\end{theorem}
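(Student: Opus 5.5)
The argument follows the nonlinear-ICA route with auxiliary variables (Hyvärinen et al., Khemakhem et al.), applied slice-wise at a fixed observable post-intervention value. Fix a reachable $s'_o$ in the zero-action-covered support.

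\textbf{Step 1: a slice-wise change of variables.} By Theorem~\ref{thm:point_ident}, both parameterizations agree on the observable post-intervention coordinates, $\bar{s}_{t,o}=\tilde{s}_{t,o}=s'_o$. Combining this with A1 (injectivity of $p_{\mathrm{env}}$ and $\bar p_{\mathrm{env}}$ with continuous, $C^2$ inverses), the composite map
\[
\phi_{s'_o}:=\bar p_{\mathrm{env}}^{-1}\circ p_{\mathrm{env}}(s'_o,\cdot)
\]
is well defined on the latent slice. Its output has observable coordinates equal to $s'_o$. Its latent output, denoted $\phi_{s'_o,l}$, satisfies $\bar{s}_{t,l}=\phi_{s'_o,l}(\tilde{s}_{t,l})$. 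Using B1 and the equality of the induced transition distributions on every context producing $s'_o$, $\phi_{s'_o,l}$ is a $C^2$ diffeomorphism between the latent supports.

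\textbf{Step 2: the log-density identity.} Change of variables together with the factorization \eqref{eq:act_factorization} gives, for every such context $u=(s_t,a_t)$,
\[
\sum_{j}\log\bar p_{l,j}(\bar s_{l,j}\mid u)=\sum_i \log p_{l,i}(\tilde s_{l,i}\mid u)-\log|\det J_{\phi}(\tilde s_l)|.
\]
The Jacobian term does not depend on $u$, because $\phi_{s'_o,l}$ is context-free once $s'_o$ is fixed.

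\textbf{Step 3: the key computation.} Differentiate the identity with respect to $\bar s_{l,j}$ and $\bar s_{l,k}$ for $j\neq k$. The left-hand side vanishes because of the factorization. The right-hand side becomes
\[
\sum_i\Big[\partial^2_{i}\log p_{l,i}\,\tfrac{\partial\tilde s_i}{\partial\bar s_j}\tfrac{\partial\tilde s_i}{\partial\bar s_k}+\partial_i\log p_{l,i}\,\tfrac{\partial^2\tilde s_i}{\partial\bar s_j\partial\bar s_k}\Big]
\]
plus a $u$-independent Jacobian term. Next, take differences across $2d_l+1$ contexts $u_0,\dots,u_{2d_l}$ producing $s'_o$. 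This cancels the Jacobian term and yields a homogeneous linear system in the $2d_l$ unknowns $\frac{\partial\tilde s_i}{\partial\bar s_j}\frac{\partial\tilde s_i}{\partial\bar s_k}$ and $\frac{\partial^2\tilde s_i}{\partial\bar s_j\partial\bar s_k}$. The coefficients are the differences of first and second derivatives of $\log p_{l,i}(\cdot\mid u_m)-\log p_{l,i}(\cdot\mid u_0)$.

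\textbf{Step 4: using B2.} The precise form of the variability condition B2 is exactly that this $2d_l\times 2d_l$ coefficient matrix has full rank at each point. Under that condition all unknowns vanish, and in particular $\frac{\partial\tilde s_i}{\partial\bar s_j}\frac{\partial\tilde s_i}{\partial\bar s_k}=0$ for $j\ne k$. Hence each row of the inverse Jacobian has at most one nonzero entry. Invertibility forces exactly one nonzero entry per row and per column, so the inverse Jacobian is a generalized permutation matrix at every point.

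\textbf{Step 5: from a pointwise to a global permutation.} The B1 assumptions (connected supports, local product neighborhoods) and continuity of the Jacobian imply the permutation pattern is locally constant, hence constant on the connected support. Each $\bar s_{l,j}$ therefore depends only on $\tilde s_{l,\sigma(j)}$ through a map $h_{j,s'_o}$. That map is one-dimensional with nonvanishing derivative, so it is a diffeomorphism, which is \eqref{eq:conditional_latent_component_ident}.

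\textbf{Main obstacle.} The hardest step is Step 1: showing rigorously that equality of the transition distributions conditioned on $(s_t,a_t)$ yields a single, context-independent diffeomorphism $\phi_{s'_o,l}$ on the latent slice. This is where A1 and the pointwise result must be combined carefully. I would first argue that both models push the latent law forward to the same next-state law on the slice. I would then invoke injectivity to transport it back, handling support matching via the continuity of the inverses.
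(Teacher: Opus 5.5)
Your proof is correct and follows the paper's nonlinear-ICA argument, but run in mirror image, which changes which model carries the variability hypothesis.

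The paper differentiates the contrast identity $\sum_j c_j(\tilde s_{t,l,j},\mathbf u)=\sum_m \bar c_m(H_m(\tilde s_{t,l}),\mathbf u)$ with respect to the true coordinates $\tilde s_{t,l,r},\tilde s_{t,l,k}$. Its unknowns are therefore $\partial_r H_m\,\partial_k H_m$ and $\partial^2_{rk}H_m$, and its coefficient matrix is built from the alternative model's contrasts $\bar c_m$ at $\bar s_{t,l}$. That is exactly how B2 is stated in the appendix.

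You differentiate with respect to $\bar s_{t,l}$, so your matrix is built from the true model's contrasts at $\tilde s_{t,l}$. Your Step 4 claim that B2 ``is exactly'' this matrix therefore does not match the paper's B2. Nothing breaks, though. Both parameterizations lie in the same factorized model class, and the conclusion is symmetric, since the inverse of a generalized permutation Jacobian is again one. Relabeling the two parameterizations, or differentiating in $\tilde s_{t,l}$ instead, lets you use the paper's hypothesis verbatim.

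Each placement has something going for it. Yours is the standard one in Hyv\"arinen et al.\ and is arguably the more natural identifiability assumption, because it constrains only the data-generating model. The paper's placement is simply what comes out of working with $H$ directly and differencing against $\mathbf u_0$ before differentiating, which removes the Jacobian term at the outset.

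The step you flag as the main obstacle is already handled in the proof of Theorem~\ref{thm:point_ident}. The context-free pushforward $H_{\#}\nu_{\mathbf u}=\bar\nu_{\mathbf u}$, combined with $\pi_o\circ H=\pi_o$, restricts to $H_{\#}P^{\mathbf u}_l=\bar P^{\mathbf u}_l$ on the fiber. Also, the fact that $H_{l,s'_o}$ is a diffeomorphism comes from the $C^2$ inverses in A1, not from B1.
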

Once Theorem~\ref{thm:point_ident} has identified the observable component, component-wise identification of the latent component requires no additional zero-action information. It instead follows from B1 and B2, which are standard conditions in auxiliary-variable nonlinear ICA \cite{hyvarinen2019nonlinear}. B1 imposes smoothness and support regularity, while B2 requires enough variation across state-action contexts to distinguish the latent factors. Under these conditions, the latent coordinates are identifiable up to permutation and one-dimensional invertible transformations.

Together, the two theorems identify the observable component exactly and the latent component componentwise. The proofs are given in Appendices~\ref{app:proof_pointwise} and~\ref{app:proof_latent}.

\subsection{Addressing Policy Learning Bias}

Beyond model-learning bias, confounding can also induce policy-learning bias. To see this, in the actor-critic framework, the critic $Q_\omega(s,a)$ learns from a replay buffer $(s,a,e_{\pi_\beta},r,s')\sim\mathcal{D}$ collected by historical behavior policies, and is trained by minimizing the temporal-difference (TD) loss:
\begin{equation}
\label{eq:td_loss}
\mathcal{L}_{\mathrm{TD}}(\omega)
=
\mathbb{E}_{\mathcal{D}}\left[
\bigl(Q_\omega(s,a)-y\bigr)^2
\right],
\end{equation}
where
\begin{equation}
\label{eq:td_target_estimated}
y
=
r + \gamma\,\mathbb{E}_{a'\sim\pi_\theta(\cdot\mid s')}
\bigl[Q_{\omega^-}(s',a')\bigr]
\end{equation}
is the Bellman target. Because replay actions are sampled under the behavior density~$\pi_\beta(a\mid s)$, the collected distribution can be biased in the sense that some actions can be observed more frequently than others for the same state. As a result, the standard TD update forces the critic to fit high-density actions more accurately while leaving the value estimates for rare or counterfactual actions highly erroneous. This uneven accuracy is critical because the critic guides the actor through the policy gradient, which requires reliable estimates for counterfactual actions—the actions most affected by confounding bias. To formally investigate the nature of this bias, we evaluate the policy-gradient target directly rather than focusing solely on the Bellman loss. We formulate this target with respect to the replay state marginal $\mathcal{D}_S$ as follows:
\begin{equation}
\label{eq:replay_pg_target}
\begin{aligned}
\psi^{\mathcal{D}}_{\mathrm{pg}}
&=
\mathbb{E}_{\substack{s\sim\mathcal{D}_S\\
a\sim\pi_\theta(\cdot\mid s)}}
\left[
Q^{\pi_\theta}(s,a)\,
g_\theta(s,a)
\right],
\\
g_\theta(s,a)
&=
\nabla_\theta\log\pi_\theta(a\mid s).
\end{aligned}
\end{equation}
In practice, the actor update uses the learned critic $Q_\omega$ rather than the ideal action-value function $Q^{\pi_\theta}$, yielding the empirical plug-in estimate:
\begin{equation}
\label{eq:pg_plugin}
\widehat\psi^{\mathcal{D}}_{\mathrm{pg}}
=
\mathbb{E}_{\substack{s\sim\mathcal{D}_S\\ a\sim\pi_\theta(\cdot\mid s)}}
\left[
Q_\omega(s,a)\,g_\theta(s,a)
\right].
\end{equation}
The gap $\widehat\psi^{\mathcal{D}}_{\mathrm{pg}}-\psi^{\mathcal{D}}_{\mathrm{pg}}$ characterizes the policy learning bias. To rigorously trace this gap, we adopt a causal and semiparametric perspective, analyzing the first-order sensitivity of the target parameter $\psi^{\mathcal{D}}_{\mathrm{pg}}$ under the replay distribution $\mathcal{D}$. This sensitivity is mathematically represented by the efficient influence function (EIF). Let $\varphi^{\mathcal{D}}_{\mathrm{pg}}$ denote the EIF of $\psi^{\mathcal{D}}_{\mathrm{pg}}$. Under standard regularity conditions, the leading first-order behavior of the bias satisfies:
\begin{equation}
\label{eq:pg_asymptotic_expansion}
\sqrt{n}
\bigl\{
\widehat\psi^{\mathcal{D}}_{\mathrm{pg}}
-
\psi^{\mathcal{D}}_{\mathrm{pg}}
\bigr\}
=
\frac{1}{\sqrt{n}}
\sum_{i=1}^{n}
\varphi^{\mathcal{D}}_{\mathrm{pg}}(o_i)
+
o_P(1),
\end{equation}
where $o_i=(s_i,a_i,y_i^{\pi_\theta})$ represents replay samples augmented with the ideal one-step Bellman variable $y^{\pi_\theta} = r + \gamma\,Q^{\pi_\theta}(s',a')$. The explicit formulation of the EIF formally reveals how the data-generating mechanism dictates this bias, as detailed in the following theorem.
\begin{theorem}
\label{thm:eif_pg}
Consider the replay-state policy-gradient functional $\psi^{\mathcal{D}}_{\mathrm{pg}}$ in Eq.~\eqref{eq:replay_pg_target}, where the state marginal is $\mathcal{D}_S$ and the replay action density is $\pi_\beta(a\mid s)$. Treating $\pi_\theta$ and $g_\theta$ as fixed when taking the pathwise derivative, under the nonparametric replay model for $(s,a,y^{\pi_\theta})$, the efficient influence function is
\begin{equation}
\label{eq:eif_pg}
\begin{aligned}
\varphi^{\mathcal{D}}_{\mathrm{pg}}(o)
&=
\mathbb{E}_{\tilde a\sim\pi_\theta(\cdot\mid s)}\Bigg[
\Bigg(
\frac{\delta(a-\tilde a)}{\pi_\beta(\tilde a\mid s)}
\bigl(y^{\pi_\theta}-Q^{\pi_\theta}(s,\tilde a)\bigr)
\\
&\qquad\qquad\qquad
+
Q^{\pi_\theta}(s,\tilde a)
\Bigg)
 g_\theta(s,\tilde a)
\Bigg]
-
\psi^{\mathcal{D}}_{\mathrm{pg}},
\end{aligned}
\end{equation}
where $\delta(a-\tilde a)$ denotes the Dirac delta centered at the replay action $a$.
\end{theorem}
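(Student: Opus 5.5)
The plan is to compute the efficient influence function as the pathwise derivative of $\psi^{\mathcal{D}}_{\mathrm{pg}}$ along regular parametric submodels of the nonparametric replay model for $o=(s,a,y^{\pi_\theta})$, and then identify the Riesz representer. First I would rewrite the target as a functional of the data distribution $P$ alone. This means treating $y^{\pi_\theta}$ as an observed coordinate and writing $Q^{\pi_\theta}(s,a)=\mathbb{E}_P[y^{\pi_\theta}\mid s,a]$. Here I use that $\mathbb{E}[y^{\pi_\theta}\mid s,a]=r(s,a)+\gamma\,\mathbb{E}[Q^{\pi_\theta}(s',a')\mid s,a]=Q^{\pi_\theta}(s,a)$ by the Bellman equation. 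The functional is then
\[
\psi(P)=\int p(s)\int \pi_\theta(\tilde a\mid s)\,\mathbb{E}_P[y\mid s,\tilde a]\,g_\theta(s,\tilde a)\,d\tilde a\,ds .
\]
In this expression $\pi_\theta$ and $g_\theta$ are fixed, as the statement stipulates. The dependence of $y^{\pi_\theta}$ on $Q^{\pi_\theta}$ is frozen inside the observed variable and is not differentiated. Because the model is nonparametric, the tangent space is all of $L_2^0(P)$, so any influence function is automatically the efficient one. It therefore suffices to exhibit one mean-zero $\varphi$ with $\tfrac{d}{dt}\psi(P_t)\big|_{t=0}=\mathbb{E}_P[\varphi(o)S(o)]$ for every submodel $P_t$ with score $S$.

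Next I would factor the density as $p(s)\,\pi_\beta(a\mid s)\,p(y\mid s,a)$ and split the score accordingly:
\[
S=S_s(s)+S_{a\mid s}(a,s)+S_{y\mid s,a}(y,s,a),
\]
where each term is conditionally mean-zero. Differentiating $\psi(P_t)$ then gives two contributions.

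(i) The state-marginal term is $\int \dot p(s)\,m(s)\,ds$ with $m(s)=\int\pi_\theta Q^{\pi_\theta}g_\theta\,d\tilde a$. It equals $\mathbb{E}[(m(s)-\psi)\,S_s]$, and this supplies the plug-in piece $\mathbb{E}_{\tilde a\sim\pi_\theta}[Q^{\pi_\theta}(s,\tilde a)g_\theta(s,\tilde a)]-\psi^{\mathcal{D}}_{\mathrm{pg}}$.

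(ii) The regression term is $\int p(s)\int\pi_\theta(\tilde a\mid s)\,g_\theta(s,\tilde a)\,\mathbb{E}[(y-Q^{\pi_\theta})S_{y\mid s,a}\mid s,\tilde a]\,d\tilde a\,ds$. I would convert the integral against $\pi_\theta(\tilde a\mid s)\,d\tilde a$ into an expectation over the replay action by multiplying and dividing by $\pi_\beta(\tilde a\mid s)$. This yields $\mathbb{E}_P\!\big[\tfrac{\pi_\theta(a\mid s)}{\pi_\beta(a\mid s)}(y-Q^{\pi_\theta}(s,a))\,g_\theta(s,a)\,S\big]$. The full score $S$ may replace $S_{y\mid s,a}$ because $y-Q^{\pi_\theta}$ is conditionally mean-zero, so it is orthogonal to $S_s$ and $S_{a\mid s}$. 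Since $\psi$ does not depend on $\pi_\beta$, the $S_{a\mid s}$ direction contributes nothing, which is consistent with this.

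Adding (i) and (ii), the density-ratio correction term can be written as $\mathbb{E}_{\tilde a\sim\pi_\theta(\cdot\mid s)}\big[\tfrac{\delta(a-\tilde a)}{\pi_\beta(\tilde a\mid s)}(y^{\pi_\theta}-Q^{\pi_\theta}(s,\tilde a))\,g_\theta(s,\tilde a)\big]$. Integrating out the Dirac delta recovers exactly $\tfrac{\pi_\theta(a\mid s)}{\pi_\beta(a\mid s)}(y^{\pi_\theta}-Q^{\pi_\theta}(s,a))g_\theta(s,a)$, and this gives Eq.~\eqref{eq:eif_pg}. As a sanity check I would also rederive the result heuristically via point-mass contamination $P_\varepsilon=(1-\varepsilon)P+\varepsilon\delta_o$. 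Under that contamination $\partial_\varepsilon Q_\varepsilon(s,\tilde a)=\delta(s-s_o)\delta(\tilde a-a_o)(y_o-Q)/(p(s)\pi_\beta(\tilde a\mid s))$, which explains the delta form in the statement. Finally, I would check that $\varphi$ has mean zero and finite variance. This requires a positivity/overlap condition, namely that $\pi_\beta(a\mid s)>0$ wherever $\pi_\theta(a\mid s)>0$ and that the ratio $\pi_\theta/\pi_\beta$ is square-integrable. I would state this condition as part of the ``standard regularity conditions''.

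The main obstacle is step (ii) in the continuous-action case: making the change of measure and the interchange of differentiation and integration rigorous. In particular, the Dirac-delta expression is not an $L_2$ function in its raw form and must be interpreted as the density ratio above. I would handle this by working exclusively with score-based submodels with bounded scores, for which dominated convergence justifies differentiating under the integral. The delta notation would then be treated purely as a notational rewriting of the importance ratio. A secondary point to flag is that $y^{\pi_\theta}$ is not genuinely observed, since it involves $Q^{\pi_\theta}$. The theorem is therefore an EIF for the augmented-data model, exactly as the statement specifies, and I would not attempt to account for the feedback of $Q^{\pi_\theta}$ through $y$.
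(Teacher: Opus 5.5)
Your proposal is correct and follows essentially the same route as the paper. Both take the pathwise derivative along submodels, split it into a centered state-marginal plug-in term and a conditional-mean (Bellman regression) term that is moved to the replay action measure via the ratio $\pi_\theta/\pi_\beta$, note that the $\pi_\beta$ direction contributes nothing, and conclude efficiency from the nonparametric tangent space; your explicit score factorization, overlap and square-integrability conditions, and reading of the Dirac delta as this importance ratio make precise points the paper leaves implicit.
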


Theorem~\ref{thm:eif_pg} formally uncovers the source of the policy learning bias, where the inverse-probability term $1/\pi_\beta(\tilde a\mid s)$ in Eq.~\eqref{eq:eif_pg} amplifies the critic's residual error for actions rarely observed in the replay buffer. Since standard TD learning (Eq.~\eqref{eq:td_loss}) is blind to this gradient-specific sensitivity, it cannot eliminate the bias on its own.

To mitigate this issue, motivated by targeted learning in the causal inference literature \cite{van2011targeted}, we develop a targeted regularization framework. By constructing an adjusted critic that targets the EIF score equation, we can systematically neutralize the magnified errors on low-propensity actions, effectively resolving the policy learning bias at its source. As illustrated in Fig.~\ref{fig:framework}(c), we construct an adjusted critic that augments the baseline critic~$Q_\omega$ with a density-scaled residual term:
\begin{equation}
\label{eq:adjusted_critic}
Q^{\mathrm{adj}}_{\omega,\xi}(s,a,e_{\pi_\beta})
=
Q_\omega(s,a)
+
\frac{\epsilon_\xi(a)}{e_{\pi_\beta}},
\end{equation}
where $e_{\pi_\beta}$ is the paired action-selection score defined in Eq.~\eqref{eq:propensity_score}, and $\epsilon_\xi(a)$ is a learnable residual correction parameterized by~$\xi$. The same adjusted form is used for the target network:
\begin{equation}
\label{eq:adjusted_td_target}
y^{\mathrm{adj}}
=
r + \gamma\,
Q^{\mathrm{adj}}_{\omega^-,\xi^-}(s',a',e_{\pi_\theta}),
\end{equation}
where $a'$ is the target action and $e_{\pi_\theta}$ is its paired action-selection score from $\pi_\theta$.

We learn the residual correction~$\epsilon_\xi$, while keeping the baseline critic parameters~$\omega$ fixed, by minimizing the targeted regularization loss
\begin{equation}
\label{eq:tr_loss}
\mathcal{R}_{\mathrm{TR}}(\xi)
=
\mathbb{E}_{\mathcal{D}}\left[
\bigl(
Q^{\mathrm{adj}}_{\omega,\xi}(s,a,e_{\pi_\beta}) - y^{\mathrm{adj}}
\bigr)^2
\right].
\end{equation}
The overall critic objective combines the standard TD loss and the targeted regularization term:
\begin{equation}
\label{eq:total_loss}
\mathcal{L}_{\mathrm{IADD-TR}}(\omega,\xi)
=
\mathcal{L}_{\mathrm{TD}}(\omega)
+
\beta\,
\mathcal{R}_{\mathrm{TR}}(\xi),
\end{equation}
where $\beta>0$ controls the strength of the targeted regularization. We use the adjusted critic in the policy-gradient plug-in estimator
\begin{equation}
\label{eq:targeted_pg_estimator}
\widehat\psi^{\mathcal{D},\mathrm{TR}}_{\mathrm{pg}}
=
\frac{1}{n}\sum_{i=1}^{n}
\mathbb{E}_{\tilde a\sim\pi_\theta(\cdot\mid s_i)}
\left[
   Q^{\mathrm{adj}}_{\omega,\xi}(s_i,\tilde a)
 g_\theta(s_i,\tilde a)
\right].
\end{equation}
The following result summarizes the statistical role of this estimator; its empirical EIF construction, regularity conditions, and proof are deferred to Appendix~\ref{app:proof_tr}.

\begin{theorem}
\label{thm:tr_asymptotic_targeting}
\textbf{(Double robustness of targeted policy-gradient estimation.)}
Under the overlap, moment, and empirical-process conditions stated in Appendix~\ref{app:proof_tr}, and provided that the fluctuation update asymptotically solves the corresponding EIF score equation, the estimator in Eq.~\eqref{eq:targeted_pg_estimator} is consistent if either the targeted critic is consistent for $Q^{\pi_\theta}$ or the replay action-density estimate is consistent for the true replay density. In either regime,
\begin{equation}
\label{eq:tr_pg_rate}
\left\|
\widehat\psi^{\mathcal{D},\mathrm{TR}}_{\mathrm{pg}}
-
\psi^{\mathcal{D}}_{\mathrm{pg}}
\right\|
=
o_P(1).
\end{equation}
\end{theorem}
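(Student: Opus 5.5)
The plan is the standard TMLE/AIPW route: write the estimation error as a von Mises expansion around the true replay law $P$, with plug-in nuisances $(\hat Q,\hat\pi_\beta)$. Here $\hat Q$ is the targeted critic, i.e., the adjusted critic of Eq.~\eqref{eq:adjusted_critic} evaluated under the fitted fluctuation, and $\hat\pi_\beta$ is the replay action-density estimate entering through $e_{\pi_\beta}$.

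Let $\hat P$ denote the plug-in law determined by these nuisances and the empirical state marginal, and let $\hat\varphi$ be Eq.~\eqref{eq:eif_pg} evaluated at $(\hat Q,\hat\pi_\beta,\widehat\psi^{\mathcal{D},\mathrm{TR}}_{\mathrm{pg}})$. I will use the exact decomposition
\begin{equation*}
\widehat\psi^{\mathcal{D},\mathrm{TR}}_{\mathrm{pg}}-\psi^{\mathcal{D}}_{\mathrm{pg}}
=
\mathbb{P}_n\hat\varphi
+(\mathbb{P}_n-P)\varphi^{\mathcal{D}}_{\mathrm{pg}}
+(\mathbb{P}_n-P)(\hat\varphi-\varphi^{\mathcal{D}}_{\mathrm{pg}})
+R_2(\hat P,P),
\end{equation*}
which follows from the definition of $R_2$ as $\psi(\hat P)-\psi(P)+P\hat\varphi$.

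The terms are handled in order.

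\emph{First term.} It is $o_P(1)$ by the hypothesis that the fluctuation update asymptotically solves the EIF score equation. I will check explicitly that the first-order condition of $\mathcal{R}_{\mathrm{TR}}$ in $\xi$ is, because of the $1/e_{\pi_\beta}$ covariate, exactly the residual-weighted score $\mathbb{P}_n\bigl[\frac{1}{\hat\pi_\beta}(y-\hat Q)\bigr]g_\theta$. The delta in Eq.~\eqref{eq:eif_pg} collapses the $\tilde a$-expectation onto the replay action, which gives the weight $\pi_\theta(a\mid s)/\hat\pi_\beta(a\mid s)$.

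\emph{Second term.} It is $O_P(n^{-1/2})$ by the CLT under the moment conditions, with overlap bounding the ratio $\pi_\theta/\pi_\beta$.

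\emph{Third term.} It is $o_P(1)$ by a Donsker or sample-splitting condition together with $L_2$ convergence of $\hat\varphi$ to a limit. For consistency alone, a Glivenko--Cantelli condition suffices.

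\emph{Remainder.} The key calculation is the second-order remainder. Taking the conditional expectation over $a\sim\pi_\beta$ and $y$ given $s$, and using $\mathbb{E}[y^{\pi_\theta}\mid s,a]=Q^{\pi_\theta}(s,a)$, I expect
\begin{equation*}
R_2=
\mathbb{E}_{s\sim\mathcal{D}_S}\mathbb{E}_{\tilde a\sim\pi_\theta}\!\left[
\Bigl(\frac{\pi_\beta(\tilde a\mid s)}{\hat\pi_\beta(\tilde a\mid s)}-1\Bigr)
\bigl(Q^{\pi_\theta}-\hat Q\bigr)(s,\tilde a)\,g_\theta(s,\tilde a)\right].
\end{equation*}
This product structure is the crux of double robustness. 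If $\hat Q\to Q^{\pi_\theta}$ in $L_2$, while $\hat\pi_\beta$ converges to any limit bounded away from zero (overlap), Cauchy--Schwarz plus a bound on $\|g_\theta\|$ (moment conditions) gives $R_2\to0$. Symmetrically, if $\hat\pi_\beta\to\pi_\beta$ while $\hat Q$ converges to an arbitrary bounded limit $\bar Q$, the ratio factor vanishes.

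Misspecification also affects the other terms. When one nuisance converges to a wrong limit, I will note that $\hat\varphi$ converges to a function with $P$-mean zero, by the same product identity applied at the limits. The centered terms are therefore still negligible, and combining all pieces yields Eq.~\eqref{eq:tr_pg_rate}.

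The main obstacle is rigor around the Dirac delta in Eq.~\eqref{eq:eif_pg}. For continuous actions, the EIF must be interpreted after integrating against $\pi_\theta(\tilde a\mid s)$, giving the density-ratio form $\frac{\pi_\theta(a\mid s)}{\pi_\beta(a\mid s)}(y-Q(s,a))g_\theta(s,a)+\mathbb{E}_{\tilde a}[Q g_\theta]-\psi$. I will rewrite everything in that form first, so that the remainder computation and the score-equation identification are ordinary integrals.

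A second subtlety is that the estimator's $Q^{\mathrm{adj}}$ uses $e_{\pi_\beta}$ only at replay actions, whereas Eq.~\eqref{eq:targeted_pg_estimator} evaluates it at $\tilde a\sim\pi_\theta$. I will take $\hat Q(s,\tilde a)=Q_\omega(s,\tilde a)+\epsilon_\xi(\tilde a)/\hat\pi_\beta(\tilde a\mid s)$, as stipulated by the appendix conditions, so that the plug-in and the score equation refer to the same nuisance pair.
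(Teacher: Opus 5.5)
Apart from one false side claim (second paragraph), your argument proves the statement as written. It takes a genuinely different route from the paper.

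The paper forms no von Mises expansion. It solves the first-order condition of the population targeted risk action by action, which gives the closed form $\epsilon^*(a)$ in Eq.~\eqref{eq:appendix_actionwise_epsilon}. It then computes the exact bias identity Eq.~\eqref{eq:targeted_remainder_identity} for the action-wise curve $\mu_0(a)=\mathbb{E}_S[Q^{\pi_\theta}(S,a)]$, which vanishes when either $\widehat m=m_0$ or $\widehat e=e_0$. Convergence of $\epsilon_\xi$ to $\epsilon^*$ then carries this to $\widehat\mu^{\mathrm{TR}}$.

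That route gets double robustness from what minimizing $\mathcal{R}_{\mathrm{TR}}$ actually does. It needs no score-equation hypothesis, only approximate risk minimization and uniform convergence of the targeted risk. Its price is that the conclusion concerns $\mu(a)$, not $\psi^{\mathcal{D}}_{\mathrm{pg}}$, whose integrand carries the state-dependent weight $\pi_\theta(\tilde a\mid s)g_\theta(s,\tilde a)$.

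Your route addresses $\psi^{\mathcal{D}}_{\mathrm{pg}}$ itself. Your $R_2$ is computed correctly, and its product form delivers double robustness. The score-equation hypothesis does the essential work only in the density-consistent regime. In the critic-consistent regime, $\mathbb{P}_n\hat\varphi\to0$ automatically and the plug-in is already consistent.

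The claim to drop is that the first-order condition of $\mathcal{R}_{\mathrm{TR}}$ in $\xi$ is the EIF score. Differentiating Eq.~\eqref{eq:tr_loss} gives $\mathbb{P}_n\bigl[\hat\pi_\beta(A\mid S)^{-1}(Y-\hat Q(S,A))\nabla_\xi\epsilon_\xi(A)\bigr]=0$. Even for a saturated correction class, this is only the action-wise condition behind Eq.~\eqref{eq:oracle_epsilon_tr}. The score you need is $\mathbb{P}_n\bigl[\pi_\theta(A\mid S)\hat\pi_\beta(A\mid S)^{-1}(Y-\hat Q(S,A))g_\theta(S,A)\bigr]$. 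Its weight varies with $S$ at fixed $A$, so it is not in the span of the directions $\nabla_\xi\epsilon_\xi(A)/\hat\pi_\beta(A\mid S)$. The Dirac collapse produces the factor $\pi_\theta g_\theta$ in the EIF, but not in the TR loss.

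Concretely, take $\widehat e=e_0$ with $e_0(a\mid s)$ free of $s$. Then:
\begin{itemize}
\item $\epsilon^*(a)=e_0(a)\,\mathbb{E}_S[r_a(S)]$;
\item the correction adds $\int\mathbb{E}_S[r_a(S)]\,\mathbb{E}_S[\pi_\theta(a\mid S)g_\theta(S,a)]\,da$ to the plug-in;
\item removing the plug-in's bias requires $\int\mathbb{E}_S[\pi_\theta(a\mid S)g_\theta(S,a)\,r_a(S)]\,da$;
\item the two differ by $\int\operatorname{Cov}_S\bigl(\pi_\theta(a\mid S)g_\theta(S,a),\,r_a(S)\bigr)\,da$.
\end{itemize}
Hence $\mathbb{P}_n\hat\varphi=o_P(1)$ rests entirely on the stated hypothesis. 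That hypothesis is a genuine extra condition, not a consequence of TR; it would follow from a vector-valued fluctuation with clever covariate $\pi_\theta g_\theta/\hat\pi_\beta$.

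A minor point: with $R_2:=\psi(\hat P)-\psi(P)+P\hat\varphi$, the score term enters your decomposition as $-\mathbb{P}_n\hat\varphi$, not $+\mathbb{P}_n\hat\varphi$. This is harmless because the term is $o_P(1)$.
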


Here double robustness refers to the critic and replay action density; the fluctuation parameter is only a device for enforcing the targeted score equation. Because the behavior-policy density is logged with each replay action in our setting, the replay-density branch does not require fitting an additional propensity model. This property makes the actor update less sensitive to critic misspecification under uneven replay-action coverage.

\section{Experiments}
\label{sec:experiments}

In this section, we evaluate the proposed method on standard MuJoCo continuous-control benchmarks~\cite{todorov2012mujoco} and use two controlled diagnostics to test the mechanisms behind its model and policy-learning components.
The experiments are organized around four questions:
\textbf{Q1.} Does IADD-TR improve sample efficiency and final control performance relative to representative model-free and model-based baselines?
\textbf{Q2.} Do the two-stage dynamics model and targeted regularization contribute distinct and complementary performance gains?
\textbf{Q3.} Does the zero-action-anchored model recover the simulator-defined post-intervention state without sacrificing next-state prediction accuracy?
\textbf{Q4.} Does the targeted critic yield a policy-gradient estimate that is more directionally accurate relative to a finite-sample Monte Carlo (MC) reference?

\subsection{Experiment Settings}
\label{sec:env_baselines}

We conduct experiments on five widely used MuJoCo continuous-control benchmarks: HalfCheetah, Hopper, Walker2d, Ant, and Humanoid. All benchmark runs use MuJoCo 2.0.0.
These environments cover locomotion tasks with different levels of contact dynamics, control complexity, and long-horizon stability requirements.
To isolate the identifiability mechanism of the two-stage model, we additionally construct a controlled synthetic dynamics environment in which the simulator explicitly generates and records the observable post-intervention state.
The toy system has state $s_t=(x_t,v_t)$ and scalar action $a_t$; the action first changes velocity, producing $\tilde{s}_{t,o}$, and an action-independent damped-oscillator ODE then maps $\tilde{s}_{t,o}$ to $s_{t+1}$.
The construction satisfies the exact zero-action anchor $\tilde{s}_{t,o}=s_t$ when $a_t=0$.
The simulator-recorded $\tilde{s}_{t,o}$ is used only for evaluation, not for fitting the recovery model; the full environment definition and protocol are given in Appendix~\ref{app:synthetic_dynamics_study}.

We compare our method with both model-free and model-based reinforcement learning baselines.
For model-free reinforcement learning, we use \textbf{SAC}~\cite{haarnoja2018soft} and \textbf{PPO}~\cite{schulman2017proximal}.
SAC is an off-policy actor-critic method that optimizes an entropy-regularized objective, while PPO is a widely used on-policy policy-gradient method based on clipped policy updates.
For model-based reinforcement learning, we consider \textbf{SLBO}~\cite{luo2018algorithmic} and \textbf{MBPO}~\cite{janner2019trust}.
SLBO performs model rollouts from the initial-state distribution, whereas MBPO generates short-horizon model rollouts from states visited by previous policies.

Zero-action collection is enabled only for IADD-TR; SAC, PPO, SLBO, and MBPO retain their original action-sampling procedures. At each IADD-TR environment interaction, with probability $\lambda_0=0.1$, we replace the action proposed by SAC with the all-zero vector $a_0=\mathbf{0}$ before executing it; otherwise, the SAC action is retained. The resulting real zero-action transitions, including their observed rewards and next states, are used only to train a separate zero-action dynamics model. This model then generates zero-action transitions used exclusively for IADD dynamics-model training, where they support the anchor $\tilde{s}_{t,o}=s_t$. Neither the real nor model-generated zero-action transitions are added to the actor or critic replay buffers, including the replay data used for TR. Because action replacement occurs within the scheduled interactions, rather than through additional data collection, all methods use the same real-environment interaction budget. Thus, the zero-action mechanism neither alters the baseline replay buffers nor gives IADD-TR access to additional real transitions; it is evaluated as an internal component of the proposed method.

For IADD-TR, we fix the two additional settings introduced by the proposed components across all MuJoCo environments: the real zero-action collection ratio is set to $0.1$, and the targeted-regularization weight is set to $\beta=1$.
The remaining training protocol is kept aligned with MBPO, including the epoch length, model-rollout schedule, learning rates, and evaluation frequency; the detailed settings are reported in Appendix~\ref{app:implementation_details}.
For all MuJoCo methods, we report results across five independent runs with different random seeds.
We evaluate the policy once per epoch, corresponding to every 1,000 environment steps, and report the mean return with standard deviation.
The controlled synthetic dynamics study uses a fixed three-member ensemble to isolate the effect of the zero-action ratio; its complete protocol is reported in Appendix~\ref{app:synthetic_dynamics_study}.

\subsection{Results}
\label{sec:results_discussion}

\begin{figure*}[t]
\centering
\includegraphics[width=0.6\textwidth, keepaspectratio]{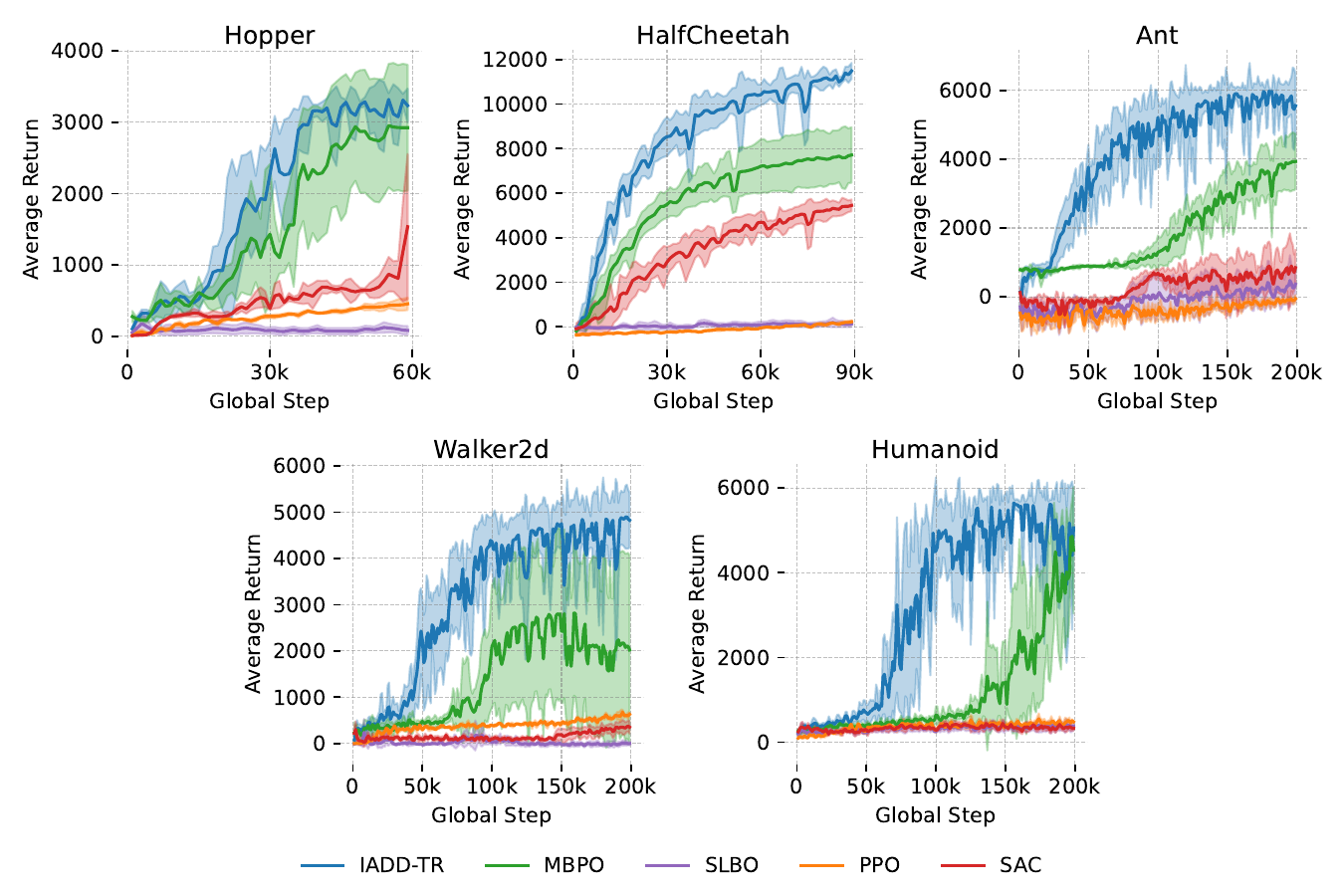}
\caption{Performance comparison on MuJoCo continuous-control benchmarks. We compare the proposed method (IADD-TR) with two model-free baselines (SAC and PPO) and two model-based baselines (SLBO and MBPO). Solid curves denote the mean return over five random seeds, and the shaded regions indicate one standard deviation.}
\label{fig:comparative_result}
\end{figure*}

\textbf{End-to-End Control Performance (Q1).}
Fig.~\ref{fig:comparative_result} tests whether the proposed model and critic designs translate into more effective policy learning under a fixed real-environment interaction budget. IADD-TR attains the highest or competitive returns across all five tasks while improving more rapidly than SAC, PPO, and SLBO. Relative to the strongest model-based comparator, MBPO, the gain is clearest on HalfCheetah, Walker2d, Ant, and Humanoid; on Hopper, IADD-TR improves faster early in training and remains competitive at convergence.

The advantage is particularly pronounced on Ant and Humanoid, where the higher-dimensional dynamics make policy learning more sensitive to errors in model-generated data and critic estimates. These results establish the end-to-end consequence of the proposed design: IADD-TR preserves the sample-efficiency benefit of short model rollouts while producing stronger policy performance on most tasks. Because aggregate return cannot identify which component produces the gain, Q2--Q4 separately examine component contributions and their intended mechanisms.

\begin{figure*}[t]
\centering
\includegraphics[width=0.6\textwidth, keepaspectratio]{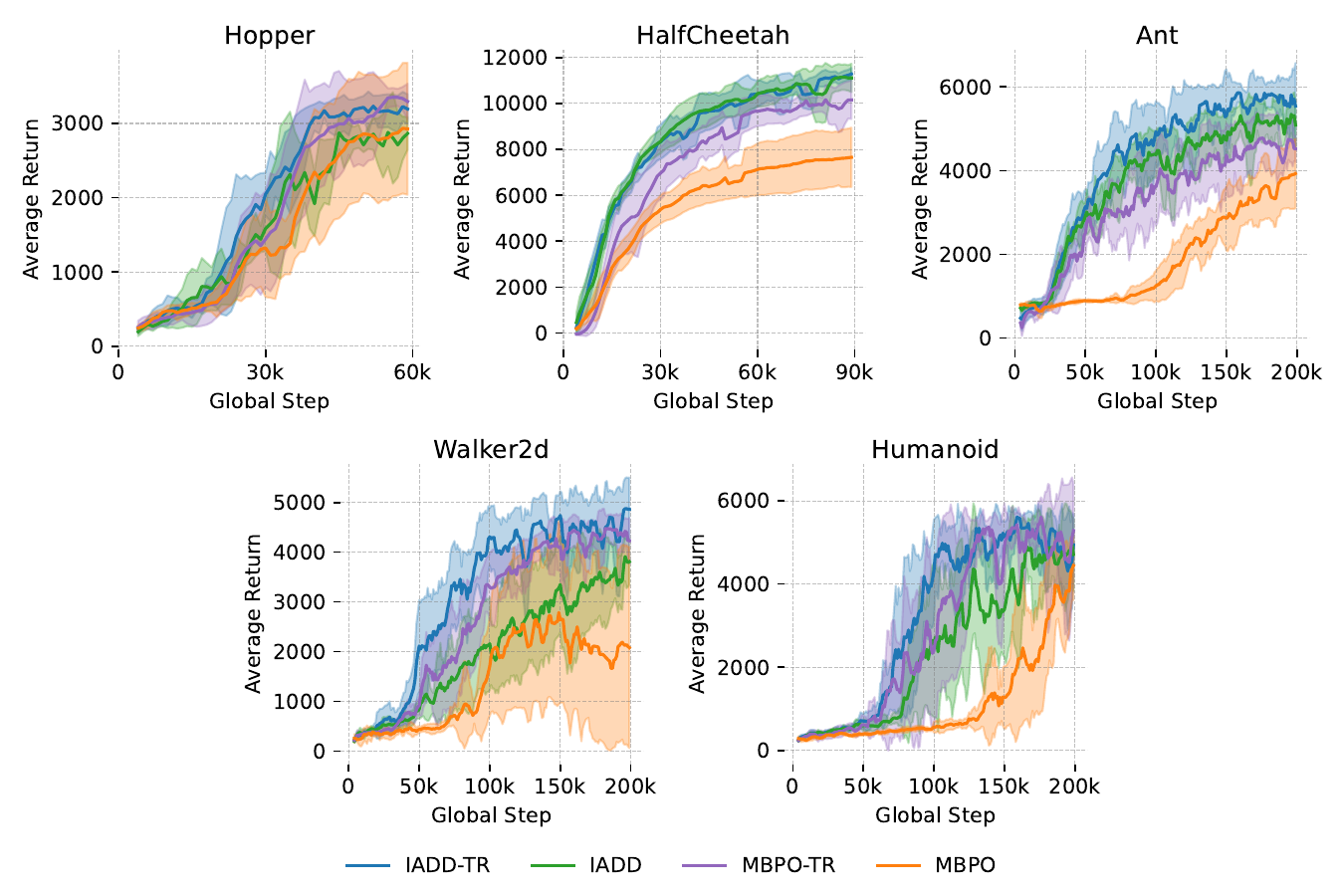}
\caption{Ablation results on MuJoCo continuous-control benchmarks. We compare MBPO, MBPO with targeted regularization (MBPO+TR), the proposed IADD, and the full method that combines both components (IADD+TR). Solid curves denote the mean return over five random seeds, and shaded regions indicate one standard deviation.}
\label{fig:ablation}
\end{figure*}

\textbf{Complementary Contributions of IADD and TR (Q2).}
The ablation in Fig.~\ref{fig:ablation} separates the two design claims: IADD modifies how model-generated transitions are constructed, whereas TR modifies how the critic is used to estimate the policy gradient. We compare MBPO, MBPO+TR, IADD, and the full IADD+TR method under the same real-environment interaction budget, training schedule, and random seeds. Zero-action collection is used only by the IADD variants, so each comparison preserves the sampling procedure associated with the corresponding model design.

Adding either IADD or TR generally improves over MBPO, showing that neither component is useful only in the presence of the other. Their effects are also task dependent. IADD provides the clearer individual gain on HalfCheetah, whereas MBPO+TR is particularly strong on Hopper. On Walker2d and Ant, combining the components produces the clearest improvement in learning speed and final return. On Humanoid, all three enhanced variants substantially improve the learning trajectory relative to MBPO, with IADD+TR maintaining strong performance throughout training.

The consistent performance of the combined method supports the intended division of labor: the model component improves the construction of synthetic experience, and TR improves the critic signal used by the actor. The return curves establish that the components are complementary, but they do not by themselves verify these internal mechanisms. Q3 therefore tests the model representation directly, while Q4 evaluates the policy-gradient estimate directly.

\begin{figure*}[t]
\centering
\includegraphics[width=0.95\textwidth, keepaspectratio]{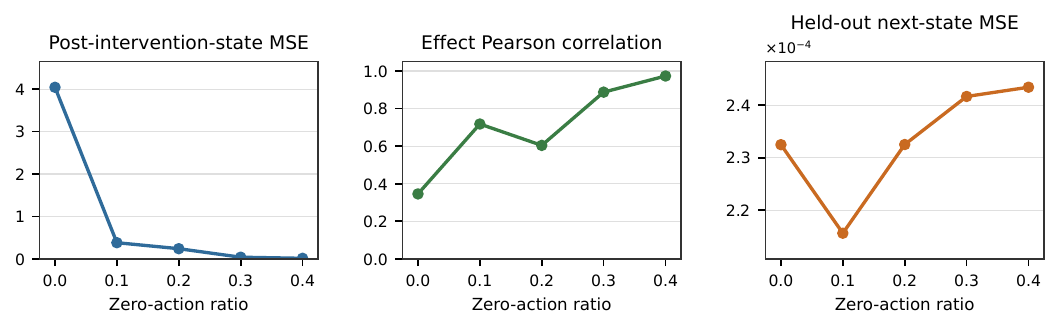}
\caption{Controlled synthetic dynamics study. The panels show the observable post-intervention-state MSE (left), action-effect Pearson correlation (middle), and held-out next-state MSE (right) over zero-action ratios from 0 to 0.4. Each curve reports the training run evaluated on 10,000 held-out nonzero-action transitions.}
\label{fig:synthetic_dynamics_study}
\end{figure*}

\textbf{Validation of the Zero-Action Model Design (Q3).}
Q3 tests the model-specific claim that the hard zero-action definition guides the learned decomposition toward the intended physical intermediate state, rather than merely fitting $s_{t+1}$ through an arbitrary hidden representation. We use the controlled synthetic dynamics environment detailed in Appendix~\ref{app:synthetic_dynamics_study}, in which the simulator records the post-intervention state $\tilde{s}_{t,o}$ before an action-independent natural evolution produces $s_{t+1}$. This recorded state is never used as a nonzero-action training target. The observable-state MSE and action-effect correlation therefore measure recovery of the intended intermediate coordinate. Held-out next-state MSE checks whether this recovery compromises the model's predictive role. We vary the zero-action ratio from 0 to 0.40 with 100,000 training transitions and 10,000 held-out nonzero-action transitions. Figure~\ref{fig:synthetic_dynamics_study} reports the completed sweep for random seed 1.

Increasing zero-action coverage substantially improved the two representation-sensitive metrics in the reported run. From $\lambda_0=0$ to $\lambda_0=0.40$, the post-intervention-state MSE decreased from $4.043$ to $0.0109$, a $99.7\%$ reduction. Over the same range, the action-effect correlation increased from $0.346$ to $0.973$. The MSE had already reached $0.379$ at $\lambda_0=0.10$ and then decreased at every higher ratio. Thus, the anchor changes what the intermediate representation encodes, not only the final transition fit.

The held-out next-state MSE was $2.325\times10^{-4}$ without zero-action samples and improved to $2.156\times10^{-4}$ at $\lambda_0=0.10$. It remained between $2.325\times10^{-4}$ and $2.434\times10^{-4}$ at the three higher ratios. Together, these results support the proposed model design within this run: zero-action coverage steers the observable block toward the simulator-defined post-intervention state while preserving low transition error.

\begin{figure}[t]
\centering
\includegraphics[width=0.48\textwidth, keepaspectratio]{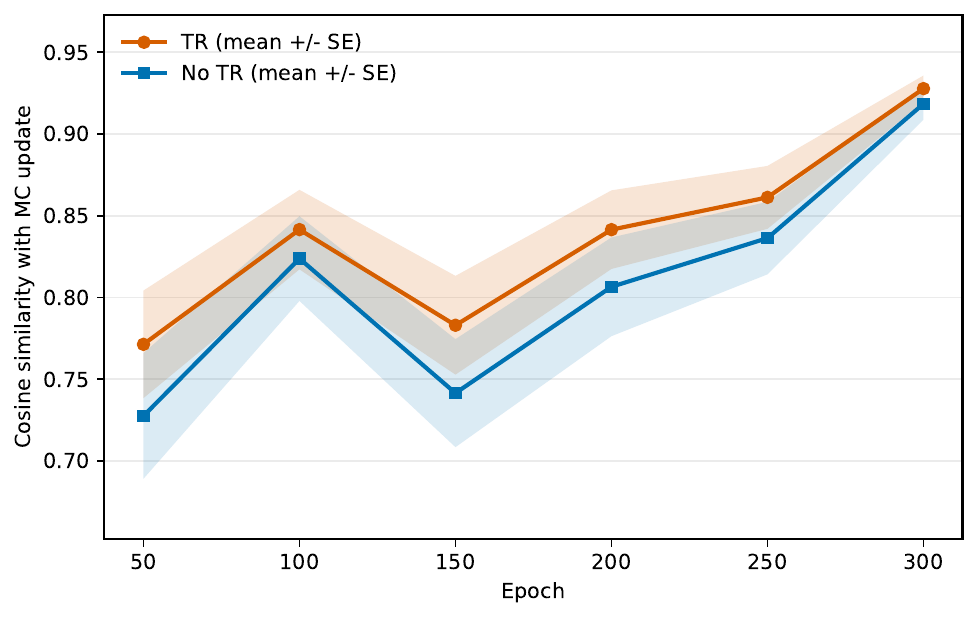}
\caption{Policy-gradient alignment diagnostic on Hopper. We independently train a SAC agent and freeze its actor checkpoints after 50, 100, 150, 200, 250, and 300 training epochs; these are not actors from the main IADD-TR experiments. At each checkpoint, we train paired critics with and without TR from identical initializations on the same 50,000 replay transitions. The critic-induced score-function gradients are evaluated on the same fixed state--initial-action pairs after every critic update and accumulated over 20 critic-training epochs. The finite-sample MC reference uses 256 held-out states, 32 trajectories per state, and a horizon of 500. Curves report the mean over five paired random seeds, and shaded bands indicate mean $\pm$ one standard error of the mean (s.e.m.). Higher cosine similarity indicates closer directional agreement with the MC policy-gradient reference.}
\label{fig:tr_policy_gradient_alignment}
\end{figure}

\textbf{Validation of the Targeted Policy-Gradient Estimator (Q4).}
Q4 directly tests the estimator-specific claim behind TR: the targeted critic should produce a policy-gradient direction closer to the long-horizon return signal used to evaluate the policy. We conduct a paired diagnostic on Hopper using independently trained SAC actors frozen at six checkpoints from 50 to 300 epochs. At each checkpoint, critics with and without TR are trained from identical initializations using the same 50,000 replay transitions, minibatch order, and stochastic target actions. Their score-function gradients are evaluated on identical held-out state--initial-action pairs and accumulated over 20 critic-training epochs, isolating the effect of the targeted correction from actor, data, and optimization variation.

We form a finite-sample MC reference on the same state--initial-action pairs using 32 trajectories per state and a horizon of 500. The MC, TR, and No-TR estimates share the same actions and policy-score vectors and differ only in the return or critic value used to weight those scores. Cosine similarity with this MC reference therefore measures directional accuracy without conflating it with gradient scale.

Fig.~\ref{fig:tr_policy_gradient_alignment} shows that TR achieves higher mean cosine similarity than No-TR at every actor checkpoint. The improvement is largest at epoch 50, where similarity increases from $0.728$ to $0.771$, and remains positive at epoch 300, increasing from $0.918$ to $0.928$. Across the six checkpoints, the paired gain ranges from $0.009$ to $0.044$. The larger gains at earlier and intermediate checkpoints are consistent with TR being most useful when the baseline critic provides a less accurate update direction.

Q4 therefore supports the policy-learning mechanism proposed in the paper: after targeted correction, the critic-induced policy-gradient estimate is more closely aligned with the MC reference. Together with the return improvements in Q2, this result links the estimator-level correction to more reliable actor updates. The MC reference is itself a finite-sample rollout estimate, and cosine similarity evaluates direction rather than magnitude; consequently, this diagnostic does not claim exact gradient recovery or guarantee monotonic return improvement.

\section{Conclusion}
\label{sec:conclusion}

This paper presents IADD-TR, a framework for reducing model-learning and policy-learning bias in model-based reinforcement learning. IADD decomposes transition dynamics into an action-intervention stage and a natural evolution stage without direct action input. A zero-action anchor aligns the observable component of the learned intermediate representation with the physical state space. TR further improves policy learning by explicitly addressing the mismatch between replay-based critic learning and policy-gradient optimization. Across five MuJoCo continuous-control tasks, IADD-TR performed competitively against representative model-free and model-based baselines and learned faster on several tasks. These results suggest that separating intervention effects from natural evolution and improving the alignment between critic learning and policy optimization can effectively mitigate learning bias in model-based reinforcement learning. The applicability of IADD-TR is currently bounded by the identifiability assumptions and the availability of a meaningful zero-action anchor. 

\appendices
\section{Implementation Details}
\label{app:implementation_details}

We summarize the implementation and training settings used in the MuJoCo experiments in Tables~\ref{tab:common_hyperparams} and~\ref{tab:environment_hyperparams}.
The rollout-length schedules and the remaining model-based training settings follow MBPO, while the three additional settings introduced by IADD-TR are fixed across all environments.
For a schedule denoted by $x\rightarrow y$ over epochs $a\rightarrow b$, the value at epoch $e$ is the thresholded linear function
$\min\{\max[x+(e-a)(y-x)/(b-a),x],y\}$.

\begin{table}[h]
\small
\centering
\caption{Common implementation settings and proposed-method hyperparameters for the MuJoCo experiments.}
\label{tab:common_hyperparams}
\begin{tabular}{|>{\centering\arraybackslash}m{0.55\columnwidth}|>{\centering\arraybackslash}m{0.30\columnwidth}|}
\hline
\textbf{Setting} & \textbf{Value} \\
\hline
MuJoCo version & 2.0.0 \\
\hline
Environment steps per epoch & 1000 \\
\hline
Model rollouts per environment step & 400 \\
\hline
Ensemble size & 7 \\
\hline
Model learning rate & $1{\times}10^{-3}$ \\
\hline
Agent learning rate & $3{\times}10^{-4}$ \\
\hline
Zero-action collection ratio & $\lambda_0=0.1$ \\
\hline
Targeted-regularization weight & $\beta=1$ \\
\hline
Random seeds & 5 \\
\hline
\end{tabular}
\end{table}

\begin{table}[h]
\scriptsize
\centering
\caption{Environment-specific hyperparameter settings for the MuJoCo experiments.}
\label{tab:environment_hyperparams}
\setlength{\tabcolsep}{2pt}
\renewcommand{\arraystretch}{1.05}
\begin{tabular}{|>{\centering\arraybackslash}m{0.21\columnwidth}|>{\centering\arraybackslash}m{0.12\columnwidth}|>{\centering\arraybackslash}m{0.15\columnwidth}|>{\centering\arraybackslash}m{0.28\columnwidth}|>{\centering\arraybackslash}m{0.15\columnwidth}|}
\hline
\textbf{Env.} & \textbf{Epochs} &
\shortstack{\textbf{Updates per}\\\textbf{step}} &
\textbf{Model horizon} &
\textbf{Network} \\
\hline
HalfCheetah & 90 & 40 & 1 & \shortstack{MLP\\$4\times200$} \\
\hline
Walker2d & 200 & 20 & 1 & \shortstack{MLP\\$4\times200$} \\
\hline
Ant & 200 & 20 & $1\rightarrow25$ over epochs $20\rightarrow100$ & \shortstack{MLP\\$4\times200$} \\
\hline
Hopper & 60 & 20 & $1\rightarrow15$ over epochs $20\rightarrow100$ & \shortstack{MLP\\$4\times200$} \\
\hline
Humanoid & 200 & 20 & $1\rightarrow25$ over epochs $20\rightarrow300$ & \shortstack{MLP\\$4\times400$} \\
\hline
\end{tabular}
\end{table}

\section{Controlled Synthetic Dynamics Study}
\label{app:synthetic_dynamics_study}

We construct a controlled synthetic dynamics environment using a two-stage ordinary differential equation (ODE) system to evaluate whether zero-action coverage enables recovery of the simulator-defined observable post-intervention state $\tilde{s}_{t,o}$. The state is two-dimensional, $s_t=(x_t,v_t)\in\mathbb{R}^2$, and the action is scalar, $a_t\in[-1,1]$. Each transition is generated by two stages:
\begin{equation}
\label{eq:appendix_synthetic_dynamics_transition}
s_t,a_t \rightarrow \tilde{s}_{t,o} \rightarrow s_{t+1},
\end{equation}
where $\tilde{s}_{t,o}=(\tilde{x}_t,\tilde{v}_t)$ is the simulator-recorded observable post-intervention state. The first stage is an instantaneous intervention on velocity,
\begin{equation}
\label{eq:appendix_synthetic_dynamics_intervention}
\tilde{x}_t=x_t,
\qquad
\tilde{v}_t=v_t+\kappa_a\tanh(a_t),
\end{equation}
with intervention scale $\kappa_a=0.8$. Therefore, the zero-action anchor holds exactly: when $a_t=0$, $\tilde{s}_{t,o}=s_t$.

The second stage is an action-independent natural evolution. Starting from $\tilde{s}_{t,o}$, the system follows a damped oscillator ODE,
\begin{equation}
\label{eq:appendix_synthetic_dynamics_natural}
\frac{dx}{dt_c}=v,
\qquad
\frac{dv}{dt_c}=-\Omega^2x-\kappa_d v,
\end{equation}
where $t_c$ denotes continuous time, $\Omega=1.2$ is the natural frequency, and $\kappa_d=0.25$ is the damping coefficient. We integrate this ODE over $\Delta t=0.15$ using fixed-step fourth-order Runge–Kutta (RK4) with eight substeps, giving the passive flow map $F_0$ and the next state $s_{t+1}=F_0(\tilde{s}_{t,o})$. Initial states are sampled uniformly from $x_t\in[-2,2]$ and $v_t\in[-1.5,1.5]$. Ordinary replay actions are sampled from the state-correlated behavior policy $a_t=\tanh(1.1x_t-0.7v_t+\zeta_t)$, where $\zeta_t\sim\mathcal{N}(0,0.25^2)$, and then clipped to the action range. Consequently, ordinary replay tuples mix intervention effects with passive evolution.

For each zero-action ratio $\lambda_0\in\{0,0.10,0.20,0.30,0.40\}$, we keep the total training size fixed at $N=100{,}000$. The training set contains $\lfloor\lambda_0N\rfloor$ exact zero-action transitions and $N-\lfloor\lambda_0N\rfloor$ ordinary transitions. The observed training next states contain independent Gaussian noise with standard deviation $0.02$. Evaluation uses $M=10{,}000$ clean held-out transitions. All ratios use the same evaluation set and nested ordinary and zero-action data pools, so the mixture ratio is the only changing factor. The reported sweep uses random seed 1, which determines the model initialization and generated data pools. The simulator records $\tilde{s}_{t,o}$, but it is never provided as a training target on nonzero-action samples.

Figure~\ref{fig:synthetic_dynamics_study} and Table~\ref{tab:synthetic_dynamics_results} report this single completed training run. The 10,000 held-out transitions are evaluation observations rather than independent training repetitions, so no cross-seed uncertainty is claimed.

We train a three-member ensemble of two-stage models for 100 epochs. Each stage contains two fully connected layers with hidden width four and a Swish activation. The action-intervention model produces two-dimensional observable and auxiliary latent blocks,
\begin{equation}
p_{\mathrm{act}}(s_t,a_t)
=
\bigl(\widehat{\tilde{s}}_{t,o},\widehat{\tilde{s}}_{t,l}\bigr)
\in\mathbb{R}^{2}\times\mathbb{R}^{2},
\end{equation}
giving a four-dimensional intermediate representation. The natural evolution model receives this complete representation and directly predicts the next state $s_{t+1}$.

All inputs and next-state targets are standardized using the training set. For each ensemble member $k$, the observable intermediate output is defined by
\begin{equation}
\widehat{\tilde{s}}^{(k)}_{i,o}
=
\begin{cases}
s_i, & a_i=0,\\
p^{(k)}_{\mathrm{act},o}(s_i,a_i), & a_i\ne 0.
\end{cases}
\end{equation}
The latent block $\widehat{\tilde{s}}^{(k)}_{i,l}$ remains learned for both zero and nonzero actions. The action-intervention and natural evolution models are jointly optimized using Adam with a learning rate of \(10^{-3}\), a batch size of 256, and the Gaussian negative log-likelihood objective $\widehat{\mathcal{L}}=\widehat{\mathcal{L}}_{\mathrm{NLL}}$. Thus, zero-action samples fix the observable input to the natural evolution stage through the hard model definition, without an auxiliary intermediate-state loss or an anchoring weight.

We evaluate the observable post-intervention recovery error in physical state coordinates,
\begin{equation}
\label{eq:appendix_mid_mse_metric}
\mathrm{MSE}_{\mathrm{mid}}
=
\frac{1}{KMd_o}\sum_{k=1}^{K}\sum_{i=1}^{M}
\left\|
\widehat{\tilde{s}}^{(k)}_{i,o}-\tilde{s}_{i,o}
\right\|_2^2,
\end{equation}
the Pearson correlation between the ensemble-mean recovered action effect $\widehat{\tilde{s}}_{i,o}-s_i$ and the true effect $\tilde{s}_{i,o}-s_i$, and the held-out next-state MSE.

\begin{table}[t]
\centering
\caption{Results of the controlled synthetic dynamics study for random seed 1 with 100,000 training transitions and 10,000 clean held-out transitions per ratio. Lower MSE and higher Pearson correlation are better.}
\label{tab:synthetic_dynamics_results}
\resizebox{\columnwidth}{!}{%
\begin{tabular}{|c|c|c|c|}
\hline
$\lambda_0$ & $\mathrm{MSE}_{\mathrm{mid}}\downarrow$ & \textbf{Effect Pearson}$\uparrow$ & $\mathrm{MSE}_{\mathrm{next}}\downarrow$ \\
\hline
0.00 & $4.0435$ & $0.3455$ & $2.3250\times10^{-4}$ \\
\hline
0.10 & $0.3794$ & $0.7178$ & $2.1564\times10^{-4}$ \\
\hline
0.20 & $0.2378$ & $0.6039$ & $2.3252\times10^{-4}$ \\
\hline
0.30 & $0.03725$ & $0.8876$ & $2.4168\times10^{-4}$ \\
\hline
0.40 & $0.01090$ & $0.9735$ & $2.4344\times10^{-4}$ \\
\hline
\end{tabular}%
}
\end{table}

Table~\ref{tab:synthetic_dynamics_results} shows progressively lower intermediate-state MSE as the zero-action ratio increases. From $\lambda_0=0$ to $\lambda_0=0.40$, the post-intervention-state MSE decreases by $99.7\%$, while the effect correlation increases from $0.3455$ to $0.9735$. Because the model copies the observable intermediate output from the current state at $a=0$, its zero-action error is at numerical precision. This hard-constraint error holds for every ratio and is therefore not reported as a learned metric.

The next-state MSE reaches its minimum of $2.1564\times10^{-4}$ at $\lambda_0=0.10$ and remains below $2.44\times10^{-4}$ at every ratio. Thus, the reported run shows improved observable-state recovery while maintaining low one-step prediction error. Because changing $\lambda_0$ also changes the number of nonzero-action training transitions, the sweep does not isolate the cause of the small differences among nonzero ratios.

\section{Proof of Theorem \ref{thm:point_ident}}
\label{app:proof_pointwise}

\begin{proof}
For each reachable context $\mathbf u=(s,a)$, let $\nu_{\mathbf u}$ and $\bar \nu_{\mathbf u}$ denote the conditional distributions of the complete intermediate representations under the two parameterizations. Observational equivalence gives
\begin{equation}
\label{eq:appendix_equal_transition_pushforwards}
(p_{\mathrm{env}})_{\#}\nu_{\mathbf u}
=
(\bar p_{\mathrm{env}})_{\#}\bar \nu_{\mathbf u}
\end{equation}
for every $\mathbf u$, where $(\cdot)_{\#}$ denotes the pushforward of a probability distribution.

By A1, both natural evolution mechanisms have inverses on the common reachable next-state support. Hence they induce the context-independent change of intermediate coordinates
\begin{equation}
\label{eq:appendix_environment_coordinate_change}
H
:=
\bar p_{\mathrm{env}}^{-1}\circ p_{\mathrm{env}}.
\end{equation}
Applying $\bar p_{\mathrm{env}}^{-1}$ to Eq.~\eqref{eq:appendix_equal_transition_pushforwards} yields
\begin{equation}
\label{eq:appendix_intermediate_pushforward}
H_{\#}\nu_{\mathbf u}=\bar \nu_{\mathbf u}.
\end{equation}

Let $\pi_o$ denote projection onto the observable coordinate. Take any reachable intermediate representation $\mathbf z$ whose image $y=p_{\mathrm{env}}(\mathbf z)$ lies in the conditional next-state support of some zero-action context $(s,a_0)$. This is precisely the zero-action-covered portion of the reachable support specified in the theorem. Observational equivalence makes this conditional support the same under both parameterizations. Because the natural evolution mechanisms are one-to-one and have continuous inverses on their reachable ranges, $\mathbf z$ belongs to the intermediate support under $(s,a_0)$, while $H(\mathbf z)$ belongs to the corresponding barred support. The defining zero-action identities of the two parameterizations then give $\pi_o(\mathbf z)=s$ and $\pi_o(H(\mathbf z))=s$. Therefore,
\begin{equation}
\label{eq:appendix_observable_coordinate_fixed}
\pi_o(H(\mathbf z))=\pi_o(\mathbf z).
\end{equation}

For an arbitrary reachable $\mathbf u=(s_t,a_t)$ whose conditional next-state support is covered by zero-action contexts, $\nu_{\mathbf u}$ is concentrated on intermediate representations whose observable coordinate is the deterministic value $p_{\mathrm{act},o}(s_t,a_t)$. Similarly, $\bar \nu_{\mathbf u}$ has the deterministic observable coordinate $\bar p_{\mathrm{act},o}(s_t,a_t)$. Combining Eqs.~\eqref{eq:appendix_intermediate_pushforward} and~\eqref{eq:appendix_observable_coordinate_fixed} shows that these two point masses coincide. Thus
\[
\bar p_{\mathrm{act},o}(s_t,a_t)
=
p_{\mathrm{act},o}(s_t,a_t),
\]
which proves pointwise identifiability on the zero-action-covered reachable support.
\end{proof}

\section{Proof of Theorem \ref{thm:latent_component_ident}}
\label{app:proof_latent}

Fix a reachable observable post-intervention state $s'_o$ and define its state-action context fiber by
\begin{equation}
\label{eq:observable_context_fiber}
\mathcal U(s'_o)
:=
\{\mathbf u=(s,a):p_{\mathrm{act},o}(s,a)=s'_o\}.
\end{equation}
Because both parameterizations belong to the factorized model class in Eq.~\eqref{eq:act_factorization}, their latent conditional densities already satisfy
\[
\begin{aligned}
p_l(\tilde{s}_{t,l}\mid\mathbf u)
&=
\prod_{j=1}^{d_l}p_{l,j}(\tilde{s}_{t,l,j}\mid\mathbf u),
\\
\bar p_l(\bar{s}_{t,l}\mid\mathbf u)
&=
\prod_{j=1}^{d_l}\bar p_{l,j}(\bar{s}_{t,l,j}\mid\mathbf u).
\end{aligned}
\]
Thus, factorization is part of the model definition rather than an additional identifiability assumption. The additional conditions used in the theorem are:
\begin{itemize}
    \item B1 (\underline{Regularity of conditional latent densities}):
    The component densities are positive and twice continuously differentiable on connected latent supports that locally contain product neighborhoods.
    \item B2 (\underline{Sufficient variability within an observable fiber}):
    Fix $\mathbf u_0\in\mathcal U(s'_o)$ and define
    \[
    \bar c_j(x,\mathbf u)
    :=
    \log\bar p_{l,j}(x\mid\mathbf u)
    -
    \log\bar p_{l,j}(x\mid\mathbf u_0).
    \]
    For every reachable $\bar{s}_{t,l}$, there exist $2d_l$ contexts $\mathbf u_1,\dots,\mathbf u_{2d_l}\in\mathcal U(s'_o)$ such that the matrix with columns
    \[
    \mathbf v(\bar{s}_{t,l},\mathbf u_k)
    =
    \left[
    \begin{array}{c}
    \partial_x^2\bar c_1(\bar{s}_{t,l,1},\mathbf u_k)\\
    \vdots\\
    \partial_x^2\bar c_{d_l}(\bar{s}_{t,l,d_l},\mathbf u_k)\\
    \partial_x\bar c_1(\bar{s}_{t,l,1},\mathbf u_k)\\
    \vdots\\
    \partial_x\bar c_{d_l}(\bar{s}_{t,l,d_l},\mathbf u_k)
    \end{array}
    \right]
    \in\mathbb R^{2d_l}
    \]
    has full rank.
\end{itemize}

\begin{proof}
Fix a reachable observable post-intervention state $s'_o$. By A1, the support-level coordinate change $H=\bar p_{\mathrm{env}}^{-1}\circ p_{\mathrm{env}}$ and its inverse are twice continuously differentiable. Theorem~\ref{thm:point_ident} gives $\pi_o\circ H=\pi_o$ on the reachable support. Consequently, $H$ maps each latent fiber bijectively onto the corresponding barred fiber and has the form
\[
H(s'_o,\tilde{s}_{t,l})
=
\bigl(s'_o,H_{l,s'_o}(\tilde{s}_{t,l})\bigr).
\]
Both $H_{l,s'_o}$ and its inverse are twice continuously differentiable. Differentiating their composition gives
\[
J_{H_{l,s'_o}^{-1}}\!\left(H_{l,s'_o}(\tilde{s}_{t,l})\right)
J_{H_{l,s'_o}}(\tilde{s}_{t,l})
=
I_{d_l},
\]
so $J_{H_{l,s'_o}}$ is nonsingular. Thus the support inverses in A1 induce the latent-coordinate diffeomorphism
\[
\bar{s}_{t,l}
=
H_{l,s'_o}(\tilde{s}_{t,l}).
\]
For notational simplicity within this fixed observable fiber, write $H=H_{l,s'_o}$ and $H=(H_1,\dots,H_{d_l})$.

For every $\mathbf u\in\mathcal U(s'_o)$, let $P_l^{\mathbf u}$ and $\bar P_l^{\mathbf u}$ denote the two conditional latent distributions. Because the observable coordinate is fixed at $s'_o$, restricting Eq.~\eqref{eq:appendix_intermediate_pushforward} to this fiber gives
\[
H_{\#}P_l^{\mathbf u}
=
\bar P_l^{\mathbf u}.
\]
The change-of-variables formula therefore yields
\begin{equation}
\label{eq:appendix_latent_density_change}
p_l(\tilde{s}_{t,l}\mid\mathbf u)
=
\bar p_l(H(\tilde{s}_{t,l})\mid\mathbf u)
\left|\det J_H(\tilde{s}_{t,l})\right|.
\end{equation}
Let
\[
c_j(x,\mathbf u)
:=
\log p_{l,j}(x\mid\mathbf u)
-
\log p_{l,j}(x\mid\mathbf u_0),
\]
and define $\bar c_j$ as in B2. Evaluating the logarithm of Eq.~\eqref{eq:appendix_latent_density_change} at $\mathbf u$ and $\mathbf u_0$ and subtracting cancels the Jacobian term. The component factorizations in Eq.~\eqref{eq:act_factorization} then yield
\begin{equation}
\label{eq:latent_contrast_identity}
\sum_{j=1}^{d_l}c_j(\tilde{s}_{t,l,j},\mathbf u)
=
\sum_{m=1}^{d_l}\bar c_m(H_m(\tilde{s}_{t,l}),\mathbf u).
\end{equation}

Fix two distinct latent coordinate indices $r,k\in\{1,\dots,d_l\}$. Applying $\partial^2/(\partial\tilde{s}_{t,l,r}\partial\tilde{s}_{t,l,k})$ to Eq.~\eqref{eq:latent_contrast_identity} gives
\begin{equation}
\label{eq:latent_mixed_derivative_constraint}
\begin{aligned}
0
=
\sum_{m=1}^{d_l}
\Bigl[
&\partial_x^2\bar c_m(H_m(\tilde{s}_{t,l}),\mathbf u)
\partial_{\tilde{s}_{t,l,r}}H_m(\tilde{s}_{t,l})
\partial_{\tilde{s}_{t,l,k}}H_m(\tilde{s}_{t,l})
\\
&+
\partial_x\bar c_m(H_m(\tilde{s}_{t,l}),\mathbf u)
\partial^2_{\tilde{s}_{t,l,r}\tilde{s}_{t,l,k}}H_m(\tilde{s}_{t,l})
\Bigr].
\end{aligned}
\end{equation}
For fixed $\tilde{s}_{t,l},r,k$, the derivatives of $H$ do not depend on the context $\mathbf u$. Evaluating Eq.~\eqref{eq:latent_mixed_derivative_constraint} at the $2d_l$ contexts supplied by B2 gives a full-rank homogeneous linear system. Its unique solution is
\begin{equation}
\label{eq:latent_no_coordinate_mixing}
\begin{aligned}
\partial_{\tilde{s}_{t,l,r}}H_m(\tilde{s}_{t,l})
\partial_{\tilde{s}_{t,l,k}}H_m(\tilde{s}_{t,l})
&=0,
\\
\partial^2_{\tilde{s}_{t,l,r}\tilde{s}_{t,l,k}}H_m(\tilde{s}_{t,l})
&=0.
\end{aligned}
\end{equation}
for every $m$ and every pair $r\neq k$.

Equation~\eqref{eq:latent_no_coordinate_mixing} shows that each output coordinate $H_m$ can depend locally on at most one input coordinate. Because $H$ is a diffeomorphism, its Jacobian is nonsingular; consequently, every row and every column of $J_H$ contains exactly one nonzero entry. This assignment defines a permutation $\sigma_{l,s'_o}$. The connected-product-support condition in B1 prevents the assignment from switching within the latent support, so integration along each assigned coordinate yields a one-dimensional diffeomorphism $h_{m,s'_o}$ satisfying
\[
H_m(\tilde{s}_{t,l})
=
h_{m,s'_o}\!\left(\tilde{s}_{t,l,\sigma_{l,s'_o}(m)}\right).
\]
This proves Eq.~\eqref{eq:conditional_latent_component_ident}.

\end{proof}

\section{Proof of Theorem \ref{thm:eif_pg}}
\label{app:proof_eif}

\begin{proof}
Let $O=(S,A,Y^{\pi_\theta})$ denote the augmented replay sample introduced before Theorem~\ref{thm:eif_pg}, with $S\sim\mathcal{D}_S$ and $A\mid S\sim\pi_\beta(\cdot\mid S)$. Bellman consistency gives
\begin{equation}
\mathbb{E}_{\mathcal D}
\left[Y^{\pi_\theta}\mid S=s,A=a\right]
=Q^{\pi_\theta}(s,a).
\end{equation}

Consider an arbitrary regular parametric submodel $\{\mathcal D_\varepsilon\}$ through the replay distribution $\mathcal D$, with score $\ell(O)$. Along this path, define
\begin{equation}
Q_\varepsilon(s,a)
:=
\mathbb{E}_{\mathcal D_\varepsilon}
\left[Y^{\pi_\theta}\mid S=s,A=a\right]
\end{equation}
and
\begin{equation}
\psi_\varepsilon
=
\mathbb{E}_{S\sim\mathcal D_{\varepsilon,S}}
\left[
\mathbb{E}_{\tilde a\sim\pi_\theta(\cdot\mid S)}
\left[
Q_\varepsilon(S,\tilde a)g_\theta(S,\tilde a)
\right]
\right].
\end{equation}
At $\varepsilon=0$, $Q_0=Q^{\pi_\theta}$ and $\psi_0=\psi_{\mathrm{pg}}^{\mathcal D}$. We hold $\pi_\theta$ and $g_\theta$ fixed throughout the pathwise derivative.

The derivative of the conditional Bellman mean is
\begin{equation}
\begin{aligned}
\left.
\frac{d}{d\varepsilon}Q_\varepsilon(s,a)
\right|_{\varepsilon=0}
&=
\mathbb{E}_{\mathcal D}
\left[
\bigl(Y^{\pi_\theta}-Q^{\pi_\theta}(s,a)\bigr)
\right.\\
&\qquad\left.
\times\ell(O)\mid S=s,A=a
\right].
\end{aligned}
\end{equation}
After changing the action measure from $\pi_\beta$ to $\pi_\theta$, its contribution to the derivative of $\psi_\varepsilon$ becomes
\begin{equation}
\mathbb{E}_{\mathcal D}
\left[
\frac{\pi_\theta(A\mid S)}{\pi_\beta(A\mid S)}
\bigl(Y^{\pi_\theta}-Q^{\pi_\theta}(S,A)\bigr)
g_\theta(S,A)\ell(O)
\right].
\end{equation}

The state-marginal contribution is
\begin{equation}
\mathbb{E}_{\mathcal D}
\left[
\left(
\mathbb{E}_{\tilde a\sim\pi_\theta(\cdot\mid S)}
\left[
Q^{\pi_\theta}(S,\tilde a)g_\theta(S,\tilde a)
\right]
-\psi_{\mathrm{pg}}^{\mathcal D}
\right)
\ell(O)
\right].
\end{equation}
Perturbing the behavior-policy density $\pi_\beta(a\mid s)$ contributes no additional term because the target functional integrates actions under the fixed policy $\pi_\theta$. Combining the two contributions gives
\begin{equation}
\left.
\frac{d}{d\varepsilon}\psi_\varepsilon
\right|_{\varepsilon=0}
=
\mathbb{E}_{\mathcal D}
\left[
\varphi_{\mathrm{pg}}^{\mathcal D}(O)\ell(O)
\right],
\end{equation}
where $\varphi_{\mathrm{pg}}^{\mathcal D}$ is exactly Eq.~\eqref{eq:eif_pg}. Its expectation under $\mathcal D$ is zero because the residual has conditional mean zero and the plug-in term is centered by $\psi_{\mathrm{pg}}^{\mathcal D}$. Thus it is an influence function. Since the replay distribution $\mathcal D$ is modeled nonparametrically, its tangent space is $L_0^2(\mathcal D)$, so the influence function is efficient.
\end{proof}

\section{Proof of Theorem \ref{thm:tr_asymptotic_targeting}}
\label{app:proof_tr}

We analyze the targeted correction action by action, following the functional targeted-regularization construction for continuous treatments~\cite{nie2021vcnet}. Let $O=(S,A,Y^{\pi_\theta})$ denote a replay observation, with $S\sim\mathcal D_S$, $A\mid S\sim e_0(\cdot\mid S)$, and
\begin{equation}
\label{eq:appendix_conditional_q}
m_0(s,a)
:=
\mathbb E\!\left[Y^{\pi_\theta}\mid S=s,A=a\right]
=
Q^{\pi_\theta}(s,a).
\end{equation}
The action-wise target is the marginal conditional-mean curve
\begin{equation}
\label{eq:appendix_actionwise_mean}
\mu_0(a)
:=
\mathbb E_{S\sim\mathcal D_S}
\left[m_0(S,a)\right].
\end{equation}
This target is the mean action value at action $a$; no contrast with a reference action is taken.

Let $\widehat m(s,a)=Q_\omega(s,a)$ be a preliminary critic and let $\widehat e(a\mid s)$ be a candidate replay action-density estimate. For an action-wise correction $\epsilon(a)$, define the adjusted conditional mean
\begin{equation}
\label{eq:appendix_vcnet_adjustment}
m_{\epsilon}(s,a)
=
\widehat m(s,a)
+
\frac{\epsilon(a)}{\widehat e(a\mid s)}.
\end{equation}
This is the population counterpart of the adjusted critic in Eq.~\eqref{eq:adjusted_critic}. Its population targeted risk is
\begin{equation}
\label{eq:appendix_residual_risk}
\mathcal R_\infty(\epsilon)
=
\mathbb E
\left[
\left\{
Y^{\pi_\theta}
-\widehat m(S,A)
-\frac{\epsilon(A)}{\widehat e(A\mid S)}
\right\}^2
\right].
\end{equation}

The argument uses the following conditions. First, $e_0$ and $\widehat e$ are positive and bounded away from zero on the relevant compact action support. Second, the conditional Bellman noise has mean zero and finite second moment, and the critic, inverse action densities, and correction functions are square integrable. Third, the action-wise correction class can approximate the population minimizer below, and the empirical targeted risk converges uniformly to Eq.~\eqref{eq:appendix_residual_risk}. The fitted correction is an $o_P(1)$ approximate empirical-risk minimizer. For continuous actions, these conditions are imposed in $L^2(\mathcal D_A)$ over the replay action marginal, as in functional targeted regularization. Finally, either the preliminary critic is consistent for $m_0$, or $\widehat e$ is consistent for $e_0$. When a practical bootstrapped Bellman target is used in place of $Y^{\pi_\theta}$, its contribution to the targeted risk is additionally assumed to be $o_P(1)$.

\begin{proof}
Because the correction is indexed by the observed action, the first-order condition for Eq.~\eqref{eq:appendix_residual_risk} can be evaluated action by action. For every action value $a$ in the relevant support, the population minimizer satisfies
\begin{equation}
\mathbb E\left[
\left.
\frac{
Y^{\pi_\theta}-\widehat m(S,A)
-\epsilon^*(A)/\widehat e(A\mid S)
}{
\widehat e(A\mid S)
}
\right|A=a
\right]
=0.
\end{equation}
Consequently,
\begin{equation}
\label{eq:oracle_epsilon_tr}
\epsilon^*(a)
=
\frac{
\mathbb E\left[
\left.
\{Y^{\pi_\theta}-\widehat m(S,A)\}/\widehat e(A\mid S)
\right|A=a
\right]
}{
\mathbb E\left[
\left.
\widehat e(A\mid S)^{-2}
\right|A=a
\right]
}.
\end{equation}
Uniform convergence, sufficient approximation capacity, and approximate empirical-risk minimization give
\begin{equation}
\label{eq:epsilon_convergence_tr}
\left\|
\epsilon_\xi-\epsilon^*
\right\|_{L^2(\mathcal D_A)}
=o_P(1).
\end{equation}

It remains to show that the adjusted conditional-mean curve is doubly robust. Let $p_A(a)$ denote the replay action marginal density. Since
\[
p_A(a)
=
\mathbb E_{S\sim\mathcal D_S}
\left[e_0(a\mid S)\right],
\]
conditioning on $A=a$ in Eq.~\eqref{eq:oracle_epsilon_tr} yields
\begin{equation}
\label{eq:appendix_actionwise_epsilon}
\epsilon^*(a)
=
\frac{
\mathbb E_S\left[
\dfrac{e_0(a\mid S)}{\widehat e(a\mid S)}
\{m_0(S,a)-\widehat m(S,a)\}
\right]
}{
\mathbb E_S\left[
\dfrac{e_0(a\mid S)}{\widehat e(a\mid S)^2}
\right]
}.
\end{equation}
Define the population curve induced by this correction as
\begin{equation}
\label{eq:appendix_adjusted_mean_curve}
\mu^*(a)
=
\mathbb E_S
\left[
\widehat m(S,a)
+
\frac{\epsilon^*(a)}{\widehat e(a\mid S)}
\right].
\end{equation}
Writing $r_a(S)=m_0(S,a)-\widehat m(S,a)$ and substituting Eq.~\eqref{eq:appendix_actionwise_epsilon} into Eq.~\eqref{eq:appendix_adjusted_mean_curve} gives
\begin{equation}
\label{eq:targeted_remainder_identity}
\begin{aligned}
\mu^*(a)-\mu_0(a)
=&
-\mathbb E_S[r_a(S)]
\\
&+
\frac{
\mathbb E_S[\widehat e(a\mid S)^{-1}]
\mathbb E_S\left[
\dfrac{e_0(a\mid S)}{\widehat e(a\mid S)}r_a(S)
\right]
}{
\mathbb E_S\left[
\dfrac{e_0(a\mid S)}{\widehat e(a\mid S)^2}
\right]
}.
\end{aligned}
\end{equation}

If the preliminary critic is correct, then $r_a(S)=0$, and both terms on the right-hand side of Eq.~\eqref{eq:targeted_remainder_identity} vanish. If the replay action density is correct, then $\widehat e=e_0$, and Eq.~\eqref{eq:appendix_actionwise_epsilon} reduces to
\begin{equation}
\epsilon^*(a)
=
\frac{
\mathbb E_S[r_a(S)]
}{
\mathbb E_S[e_0(a\mid S)^{-1}]
}.
\end{equation}
The correction term in Eq.~\eqref{eq:appendix_adjusted_mean_curve} then equals $\mathbb E_S[r_a(S)]$, so $\mu^*(a)=\mu_0(a)$ even when the preliminary critic is misspecified. Thus the population adjusted curve is correct if either the critic or the replay action density is correct.

For the fitted correction, define
\begin{equation}
\label{eq:appendix_empirical_mean_curve}
\widehat\mu^{\mathrm{TR}}(a)
=
\frac{1}{n}
\sum_{i=1}^{n}
\left[
Q_\omega(s_i,a)
+
\frac{\epsilon_\xi(a)}{\widehat e(a\mid s_i)}
\right].
\end{equation}
Equation~\eqref{eq:epsilon_convergence_tr}, overlap, and uniform convergence of the state averages imply that $\widehat\mu^{\mathrm{TR}}$ converges to $\mu^*$ in $L^2(\mathcal D_A)$. Combining this convergence with Eq.~\eqref{eq:targeted_remainder_identity} gives
\begin{equation}
\left\|
\widehat\mu^{\mathrm{TR}}-\mu_0
\right\|_{L^2(\mathcal D_A)}
=o_P(1)
\end{equation}
whenever either $\widehat m$ is consistent for $m_0$ or $\widehat e$ is consistent for $e_0$. This proves double robustness of the action-wise marginal conditional-mean curve.
\end{proof}

 \bibliographystyle{IEEEtran}
 \bibliography{tpami}

\end{document}